\def\shownotes{0}  
\newcommand{\authnote}[2]{{$\ll$\textsf{\footnotesize #1 notes: #2}$\gg$}}
\newcommand{\authnote}[2]{}
\newtheorem{thm}{Theorem}[section]
\newtheorem{lem}{Lemma}[section]
\newtheorem{defn}{Definition}[section]
\newtheorem{prop}{Proposition}[section]
\newcommand{\E}{\mathbb{E}}
\begin{document}
\title{Provable benefits of representation learning}
\author{Sanjeev Arora \and Andrej Risteski \thanks{Princeton University, \texttt{arora, risteski@cs.princeton.edu}. Supported by NSF grants CCF- 1302518, CCF-1527371,  Simons Investigator Award, Simons Collaboration Grant, and ONR- N00014-16-1-2329}}
\maketitle

\vspace{0cm}
\begin{abstract}

There is general consensus that learning representations is useful for a variety of reasons, e.g. efficient use of labeled data (semi-supervised learning), transfer learning and understanding hidden structure of data.
Popular techniques for representation learning include clustering, manifold learning,  kernel-learning, autoencoders, Boltzmann machines, etc. 

To study the relative merits of these techniques, it's essential to formalize the definition and goals of representation learning, so that they are all become instances of the same definition. This paper introduces such a formal framework that also formalizes the {\em utility} of learning the representation. It is related to previous Bayesian notions, but with some new twists. We show the usefulness of our framework by exhibiting simple and natural settings -- linear mixture models and loglinear models, where the power of representation learning can be formally shown. In these examples, representation learning can be performed provably and efficiently under plausible assumptions (despite being NP-hard), and furthermore: (i) it greatly reduces the need for labeled data (semi-supervised learning) and (ii) it allows solving classification tasks when simpler approaches like nearest neighbors require too much data (iii) it is more powerful than manifold learning methods.



\end{abstract}


\newcommand{\mX}{{\mathcal X}}
\newcommand{\mH}{{\mathcal H}}
\newcommand{\mC}{{\mathcal C}}
\newcommand{\from}{\colon}

\section{Introduction}
\label{s:intro}

Learning good {\em representations} of the data is generally considered a good idea,  for reasons such as semisupervised learning (using plentiful unlabeled data to extend the utility  of small amounts of labeled data),   {\em transfer learning} (learning structure in data from one domain and applying to another domain). But there is no single definition of what it means to learn a representation. Instead, representation learning is an umbrella term for a collection of techniques: clustering, manifold learning, and kernel-learning, as well as notions in deep learning such as autoencoders and Boltzmann machines.  Even a multilayer neural network trained discriminatively on classification tasks can be viewed as first learning a representation of data, which the top layer uses for classification.  
In Bayesian formulations of learning, representation learning is often viewed as finding MAP ({\em maximum a posteriori}) estimates of some latent variables or training a generative model for the input data. 

Formalizing the definition and goals of representation learning is important in order to start an exploration of the relative merits of various approaches.  This paper attempts to start such a study by giving a definition. We then study the notion in two natural settings--- \emph{linear mixture models} and \emph{loglinear models}. These are more complex than clustering, but not as complex as multilayer neural networks, so it is still possible to study algorithms that are provably efficient. 

Importantly, in these settings we can not only design polynomial-time algorithms that learn good representations (under plausible assumptions), but also exhibit a \emph{provable separation} in terms of time and sample complexity between our best polynomial-time algorithm and simpler methods. For instance, unlabeled datapoints can improve supervised classification (Section~\ref{subsec:semisupervised}), and be substantially more efficient than nearest neighbor techniques (Section~\ref{subsec:NN}), or more generally than methods for manifold learning that rely only on nearest neighbor information on the datapoints (Section~\ref{subsec:manifold}).
In other words learning a representation is much more efficient than relying upon local metric structure on original datapoints.


\section{A Framework for Representation Learning}
\label{sec:framework}

Our framework is informed by the following motivations and principles.

First, representation learning corresponds to mapping from a \textquotedblleft low level \textquotedblright\ description to a \textquotedblleft high level\textquotedblright description.  Such a map will necessarily be many-to-one: very different data points may map to the same or nearby representations, e.g. a multitude of images of different people ( totally different at pixel level) may correspond to the high-level representation expressing that it contains a human face with glasses.  


Second, there is an implicit description of which datapoints $x$ can correspond to a given representation $h$. Given that one $h$ can correspond to multiple $x$'s, it is natural to think of the datapoint $x$ being generated by a probabilistic process that uses its latent representation $h$. 

Third, we formalize the {\em utility} from learning the representation. The simplest utility could be to enable semi-supervised learning (i.e., use unlabeled data to learn the representation function and having done that, use small amounts of labeled data to do classification). A natural way to ensure this is by requiring the high-level \textquotedblleft similarity\textquotedblright of datapoints $x, x'$  correspond to closeness of the representations $h, h'$ in a natural metric (even though $x, x'$ themselves may be very different in every natural metric) -- thus classification in the representation space can be performed by a much simpler classifiers, e.g. SVMs or nearest neighbor classifiers. 

Thus we are led to the following framework.

\medskip

\noindent{\sc framework} \\
{\em 
\emph{Generative process}:  a datapoint $x$ is assumed to be probabilistically produced from its representation
 $h$ according to $x = g(h, r)$ where $g$ is a deterministic function and $r$ is a random seed chosen from some distribution. 
 
A \emph{representation function} $f$ is a mapping from the space $\mX$ of datapoints to a space 
$\mH$ of {\em representations} that inverts the above map.
 
Learning the representation function $f$ is {\em useful} for solving a set of classification problems $\mC$,
where every classifier $C \in \mC$ maps $\mH$ to a space of labels. Notice, every $C \in \mC$ can also be thought of as a map from
$\mX$ to a space of labels, which maps datapoint $x$ to the label  $C(f(x))$. Thus learning $f$ can help classify the original datapoints.}
 
\medskip
We will usually be interested in classifiers that produce binary outputs, though sometimes we will think of $C(h)$ as real-valued and lying in $[0,1]$, which can also be interpreted as a {\em probability} for outputting label $1$. If the set of labels is different from $\{0,1\}$ then $C(h)$ is a vector with real coordinates that sum to $1$, and the $i$th coordinate represents the probability of outputting label $i$.


The following definition  formalizes the goal of learning the representation function $f$ as finding an inverse of $g$, in other words to be able to produce 
 $h$ (approximately) given $x$. In the definition, constants $\gamma, \beta$ should be thought of as close to $1$. 

\begin{defn}[Encoder] Given the setting of the previous paragraphs, a function $f \from \mX \to \mH$ is a $(\gamma, \beta)$-{\em valid encoder} for the generative model $g$ if given $x =g(h, r)$ with probability at least $\beta$, 
$\|f(x) - h\| \leq (1-\gamma) \| h\|$. Here $\|\cdot\|$ is any suitable norm. 
\label{d:encoder}
\end{defn}

Of course, such a function may not exist for all generative models $g$ -- but we show interesting cases later where it does.  The following trivial observation quantifies the utility of such an encoder 

\begin{prop}[Trivial] If a classifier $C$ is $\alpha$-lipschitz, meaning $\|C(h) - C(h')\|_{\infty} \leq \alpha \|h - h'\|$ and an encoder $f$ is  $(\gamma, \beta)$-valid  then given a datapoint
$x = g(h, r)$ the probability is at least $\beta$ that $\|C(f(x)) - C(h)\|_{\infty} \leq (1-\gamma)\alpha\|h\|$.
\label{t:lip}
\end{prop}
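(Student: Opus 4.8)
The plan is to observe that this is a direct composition of the two stated hypotheses, applied on a single common ``good'' event, with no probabilistic subtleties beyond the one already packaged into the encoder guarantee. Both assumptions refer to the same randomness, namely the draw of the seed $r$ producing $x = g(h, r)$, so the entire argument can be carried out pointwise on the event where the encoder succeeds.

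Concretely, I would first fix the representation $h$ and define the event
\[
E = \left\{ \|f(x) - h\| \leq (1-\gamma)\|h\| \right\},
\]
where the randomness is over $r$ with $x = g(h,r)$. By Definition~\ref{d:encoder}, the $(\gamma,\beta)$-validity of $f$ gives exactly $\Pr[E] \geq \beta$. Next, I would invoke the $\alpha$-Lipschitz property of $C$ instantiated at the pair $(h', h) = (f(x), h)$, which yields the deterministic bound
\[
\|C(f(x)) - C(h)\|_{\infty} \leq \alpha \|f(x) - h\|.
\]
This inequality holds for every realization of $x$, with no appeal to the event $E$.

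The final step is to chain the two inequalities on the event $E$: there $\|f(x)-h\| \leq (1-\gamma)\|h\|$, so substituting into the Lipschitz bound gives $\|C(f(x)) - C(h)\|_{\infty} \leq \alpha(1-\gamma)\|h\| = (1-\gamma)\alpha\|h\|$, which is the claimed inequality. Since this holds on all of $E$, and $\Pr[E] \geq \beta$, the desired conclusion follows with probability at least $\beta$.

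There is essentially no obstacle here, which is why the statement is marked \emph{Trivial}: the Lipschitz bound is a worst-case (deterministic) estimate that transfers the encoder's error bound through $C$ without any loss, and the probability $\beta$ passes through unchanged because we never leave the single event $E$ — there is no union bound, no independence assumption, and no fresh source of failure introduced by $C$. The only point worth stating carefully is that the two hypotheses must be read as conditioning on the same draw of $x$, so that the probability-$\beta$ guarantee and the pointwise Lipschitz guarantee can be composed rather than combined via a union bound.
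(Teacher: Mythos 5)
Your argument is correct and is exactly the intended one: the paper gives no explicit proof precisely because the statement follows by applying the Lipschitz bound pointwise on the probability-$\beta$ event supplied by the encoder definition, just as you do. Nothing further is needed.
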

In particular, if the labels were actually produced using some Lipschitz classifier $C$ then  learning the representation function allows us to effectively replace $x$ by its representation $f(x)$ for purposes of classification.


{\em Note:} While the term \textquotedblleft encoder\textquotedblright\ is borrowed from deep learning, the framework  is closely inspired by  classical {\em unsupervised learning,} which is also used in deep learning in models such as {\em variational autoencoders} and {\em Boltzmann Machines}. Section~\ref{subsec:bayesian} gives precise comparison with Bayesian notions. 

\subsection{Relation to other notions of representation learning}

Classical notions of representation learning fit naturally in  our framework.

%
{\bf Clustering:} The latent representation of a point is the cluster it belongs to (the form of $g$ would determine the shape of the clusters.) In accordance with our notion of utility, clustering is often used for semi-supervised classification settings, when the labels are believed to change little within a cluster. (\cite{seeger2000learning})

 
{\bf Manifold learning} assumes data lies on a low-dimensional manifold ${\cal M}$; the datapoint is assumed to be generated from the manifold, possibly with added noise. It is used for semisupervised classification using the \emph{manifold assumption} when labels are believed to be Lipschitz with respect to manifold distance, rather than ambient distance. (\cite{belkin2004semi}) This fits in our framework: one can set $g$ to be some parametrization of the manifold ($g$ only would depend on $h$, not on $r$).  Manifold learning methods typically run in time \emph{exponential} in the dimension of the manifold since they effectively construct an $\epsilon$-net (\cite{saul2003think, tenenbaum2000global}). In fact, in Section \ref{subsec:manifold} we will even show a simple and natural settings where these methods are \emph{provably weaker} than a representation algorithm we design.  



{\bf Kernel learning} learns a kernel from data --often using nonconvex optimization---and uses it for classification~\cite{corteslearning}. Our encoder function finds an \emph{embedding} of the data -- hence induces an obvious kernel. (Interestingly, this learnt embedding is nonlinear in some settings in this paper.)
	

\subsubsection{Related notions}

{\bf Nearest neighbor classification} \emph{doesn't} use an alternate representation of the input -- and in its most basic form prescribes using the label of the majority of the $k$ nearest neighbors in the training set. Generalization properties of this  nonparametric method are analysed e.g. in \cite{chaudhuri2014rates}. The implicit assumption however is that similarity for purposes of classification is captured by some metric distance among the original datapoints, the labels are Lipschitz with respect to this distance, and that that datapoints are dense in the space, so the neighbors of a new point are relevant to deciding its label. As we show in Section~\ref{sec:necessary}, such assumptions are violated in common, natural settings -- thus nearest neighbors can often be \emph{provably weaker} than the representation learning algorithm we describe.

{\bf Locality sensitive hashing} is another approach useful for speeding up similarity queries in high dimensions, by producing a low-dimensional representation of the data which approximately preserves pairwise distances. But as the sentence similarity example showed, we are interested in settings where there isn't an apparent metric structure to preserve in the given data, and these ideas don't apply (at least if the number of datapoints is reasonable as a function of the natural parameters).
Additionally, representation learning by fitting a Restricted Boltzmann Machine for instance has been shown to improve both computation time and accuracy of locality sensitive hashing in structured settings like textual data. (\cite{salakhutdinov2007semantic})


Finally, we compare to the recent paper of \cite{hazan2016non} which also formalizes the utility of representation learning using its help in classification tasks. Unlike our generative model approach, they have an "assumption-free" approach to representation learning, so that the utility of the representation is that it should contain enough information to learn \emph{every} classifier. One quickly realizes that such a representation should preserve \emph{all} information in the input, which leads to viewing representation learning as {\bf lossless compression}. The problem with this identification of representation learning with lossless compression is that many forms of compression (e.g Lempel-Ziv) do not lead to very good representations in practice. 
In fact, in Section \ref{s:unreliable} we describe a setting where the correct representation is a \emph{not} a lossless compression, but one which throws away a ``noisy'' portion of the input. Thus, the results of our paper are very distinct from \cite{hazan2016non}.

\subsection{Clarification of connections with Bayesian modeling}
\label{subsec:bayesian}
We further explain the similarity and differences with the usual Bayesian formulations: it is common for unsupervised learning to be formalized using a latent variable generative model, namely, a distribution $p_{\theta}(x, h)$ where $x$ is the {\em visible}  data, $h$ is the {\em unobserved} part, and $\theta$ is a vector of model parameters. However, the standard goal in unsupervised learning is phrased in a different way: one must learn $\theta$ that maximizes the {\em log probability} of the training data. The learning is said to {\em generalize} if this score is comparable for training data and held-out data. Having learnt such a $\theta$, a candidate representation for $x$ is the  value of $h$ that maximizes $p_{\theta}(h|x)$.
Our framework is a special subcase whereby $p_{\theta}(x | h)$ is just 
$\Pr_r[g(h, r) =x]$, and $\theta$ is the \textquotedblleft code\textquotedblright for $g$ (e.g., a description of the circuit computing $g$). The new twist is the different goal for the algorithm: instead of focusing on measures of distributional fit such as log probability, our goal is to learn a $(\gamma, \beta)$-valid encoder, so that we can better classify the datapoints. (Aside: In the settings studied below, the encoder is either linear or a 1-layer neural net.)

From a Bayesian viewpoint, one concern is that a datapoint $x$ could be generated in multiple ways, e.g., $x= g(h, r)$ and  $x= g(h', r')$ -- but we require outputting a single $h$. 
One may imagine the necessary condition for the encoder to exist is that any such $h, h'$ are fairly close to each other. But this isn't quite correct, since the encoder can afford to ignore any set of $h'$ whose total chance of producing $x$ is small -- the correct condition is identified in the following simple lemma which was implicit in the calculations of \cite{arora2016provable}. It clarifies the encoder notion and properly situates it in the Bayesian world.


\begin{prop} If a $(1-\delta^2, \gamma)$-encoder exists for the generative model $g$ then with probability at least $1-\delta$ over the choice of $x$, the posterior distribution $\Pr_{h|x}$ satisfies $\Pr_{h|x} [\|h - f(x)\| \leq (1-\gamma)\|h\|] \geq 1-\delta$. 
\label{l:concentration_posterior}
\end{prop}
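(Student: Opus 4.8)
The plan is to run a simple averaging (two-step Markov) argument that converts the \emph{marginal} guarantee of the encoder into the \emph{posterior} (conditional) guarantee. The one thing to set up correctly at the outset is the interpretation of the encoder's defining inequality: the probability in Definition~\ref{d:encoder} is taken over the full generative process, i.e. over drawing $h$ from the prior, drawing a seed $r$, and setting $x = g(h,r)$. So I would first reinterpret the hypothesis as a statement about the joint law of $(h,x)$. Writing $B = \{(h,x) : \|f(x) - h\| > (1-\gamma)\|h\|\}$ for the set of ``bad'' pairs, the assumption that $f$ is an encoder with failure probability $\delta^2$ is exactly $\Pr[(h,x) \in B] \le \delta^2$, the probability being over this joint process.

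Next I would define the conditional failure probability $p(x) := \Pr_{h|x}[(h,x)\in B]$. By the law of total probability (the tower rule), $\E_x[p(x)] = \Pr[(h,x)\in B] \le \delta^2$, where the outer expectation is over the marginal law of $x$. Since $p(x)\ge 0$, Markov's inequality applied with threshold $\delta$ gives $\Pr_x[p(x) > \delta] \le \E_x[p(x)]/\delta \le \delta^2/\delta = \delta$. Equivalently, with probability at least $1-\delta$ over the choice of $x$ we have $p(x) \le \delta$, and unpacking the definition of $p(x)$ this reads $\Pr_{h|x}[\|h - f(x)\| \le (1-\gamma)\|h\|] \ge 1-\delta$, which is precisely the claimed posterior concentration.

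The argument is essentially a one-line double use of Markov/averaging, so I do not expect any serious obstacle. The two points that genuinely require care are: (a) correctly identifying the encoder's probability as being taken over the entire generative process, since this is what makes $\E_x[p(x)]$ equal to the marginal failure probability via the tower rule; and (b) keeping the two distinct sources of randomness cleanly separated---the outer draw of $x$ and the inner posterior draw of $h\mid x$---so that the two factors of $\delta$ in the conclusion land in the right places. The choice of threshold $\delta$ in the Markov step is exactly what splits the $\delta^2$ budget symmetrically into the two ``$1-\delta$'' guarantees, which is the reason the hypothesis is phrased with the square $\delta^2$.
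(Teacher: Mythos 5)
Your argument is correct and is essentially identical to the paper's own proof: both convert the encoder's guarantee into a statement about the joint law of $(h,x)$, use the tower rule to write the marginal failure probability as $\E_x$ of the conditional failure probability, and then apply Markov's inequality with threshold $\delta$ to split the $\delta^2$ budget. The only cosmetic difference is that you phrase Markov in terms of the failure probability $p(x)$ while the paper phrases it in terms of the success probability, which is the same step.
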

\begin{proof} 
The proof was implicit in calculations in \cite{arora2016provable}.
Suppose a $(\beta, \gamma)$-encoder $f$ exists. Then, by the definition of an encoder, 
$$\Pr_{x | h} \left[ \|f(x) - h\| \leq (1-\gamma)\|h\| \right] \geq 1-\delta^2$$
Clearly, this implies 
\begin{align*} \E_{x} \left[ \Pr_{h | x} \left[ \|f(x) - h\| \leq (1-\gamma)\|h\| \right] \right] &= \Pr_{x, h} \left[ \|f(x) - h\| \leq (1-\gamma)\|h\| \right] \\
&= \E_{h} \left[ \Pr_{x | h} \left[ \|f(x) - h\| \leq (1-\gamma)\|h\| \right] \right] \geq 1-\delta^2 \end{align*}

By Markov, this implies with probability at least $1-\delta$ over the choice of $x$, 
$$\Pr_{h | x} \left[ \|f(x) - h\| \leq (1-\gamma)\|h\| \right] \geq 1-\delta$$
as we needed. 

\end{proof}

\section{Generative models}
\label{s:basicmodels}

For simplicity, the running example in this paper will be {\em collaborative filtering}: predicting whether a user will like a movie given his/her past ratings, the so-called {\em Netflix} problem. The same reasoning applies to many other settings, including documents in natural languages. We consider two two basic families of generative models: mixture models (which are linear) and loglinear models. We focus on these, as they are substantially more complicated than clustering, but still simple enough that we can design provable algorithms for representation learning.   

\subsection{Linear models} 

The prototypical linear generative model are topic models, frequently used to understand latent structure in text. The writer is assumed to have a set of $k$ topics in mind, each of which is a distribution on $M$ words. Thus a topic $A_i$ is an $M$-dimensional vector of word frequencies. These are the model parameters $\theta$. 

When producing a document $x$, the generative model stipulates the writer randomly draw a subset of $s$ topics out of $k$, where topic $A_i$ has proportion $h_i$. Thus only 
$s$ of the $h_i$'s are nonzero and $\sum_i h_i =1$. Then $\sum_i h_i A_i$ is a distribution on words and the document $x$ is assumed to be a sequence of iid draws from this distribution. Note that $x \in {\mathbf Z}^N $ is the so-called \textquotedblleft bag of words representation,\textquotedblright where the $j$th coordinate is the number of times word $j$ occurs in the document.

Learning topic models involves learning the topic vectors given a sample of documents, and {\em inference} consists of inferring the most likely topic proportions $h$ given a document $x$. Techniques such as MCMC and variational inference are frequently used, but they come with no provable guarantees. In fact,~\cite{sontag2011complexity} show that the inference problem can even be \#P-hard with no assumptions on the topic structures. Recently linear algebraic optimization methods with provable guarantees have been developed under realistic assumptions~\cite{arora2016provable}. Note, by the discussion in Section~\ref{subsec:bayesian} we are only interested in settings where this $h$ is almost uniquely defined.

Our framework introduced above posits that the classification tasks we are interested in are those where the label of a document $x = g(h, r)$ is $C(h)$ where $h$ was the vector of topic proportions used to generate the document. 

Though the usual usage is textual data, in the rest of the paper we will use collaborative filtering terminology: we have $M$ movies, $k$ \emph{genres}, each of which is a subset of movies.\footnote{In the topic models case, this would correspond to a uniform distribution over the words in this subset; we assume this for simplicity of exposition only.} For simplicity, we assume each genre has $m$ movies. Each user likes $s$ genres (the latent variables). A user's affinity to a genre is given by a weighting, and the overall liking of a movie is proportional to the sum of the user's weights for all the genres that this movie is contained in. $T$ movies are then independently generated, by picking a movie proportional to his/her liking for it.



\subsection{Log-linear models}

In a log-linear model --- of which there are many variants ---the user again has latent affinities for genres, 
but the probability distribution specifying how to generate rated movies has a \emph{log-linear} form. 
Log-linear models are extensively used in many applications, including images (\cite{salakhutdinov2009deep}), speech (\cite{mohamed2012acoustic}), text (\cite{mikolov2013distributed}), recommendation systems (\cite{salakhutdinov2007restricted}). The most classical form is RBM (Restricted Boltzmann Machine), in which the joint probability of the latent and observable variables follows a distribution $p(x,h) \propto \exp(x^T W h)$, for some matrix $W$; $x$ and $h$ can be either binary or continuous (the latter case usually is called a \emph{Gaussian} RBM). 

The primary motivation behind the RBM form is that the inference problem of calculating $p(h|x)$ is trivial, as $p(h|x)$ is a product distribution. Unfortunately, learning the parameters of the matrix $W$ from data requires solving a highly non-convex optimization problem; in fact, even evaluating the gradients during the optimization is difficult as it requires computing the \emph{partition function}: $\sum_{x,h} \exp(x^T W h)$. \footnote{In the continuous case, the sum becomes an integral.} Indeed, \cite{long2010restricted} show that calculating this quantity even to within an exponential multiplicative factor is NP-hard. 

Our model is a slight modification of the above: we have $p(x,h) = p(h) p(x|h)$, $p(h)$ is a very simple distribution (uniform over the sphere), while $p(x|h)$ essentially has the same exponential form as the RBM. Note that in this model, a-priori both inference and parameter learning seem hard: terms like $p(x|h)$ involve a partition function $Z_h$ corresponding to each latent $h$ and make the objectives non-convex. 

\section{Efficient and provable representation learning}

In this section, we show that in the models introduced above, it is possible to learn representation provably and efficiently, under realistic assumptions on the model parameters. In the linear case, this was already implicit in prior work, so we merely place it in our framework. In the log-linear case, no such results were known, to the best of our knowledge.

\subsection{Representation learning for linear models}
\label{subs:linear}

Linear models have received a lot of attention in recent years and model parameters can be learnt in polynomial time under realistic assumptions~(\cite{anandkumar2012spectral, anandkumar2014tensor, arora2012learning}). There had been less work on provably learning encoders, but recently there has been progress on that as well -- though not in the terminology and framing used in this paper. Hence, we will merely rephrase these results in our language.

The particular simplified model for collaborative filtering we described is a special case of the Latent Dirichlet Allocation (LDA) model, which in general would allow the probabilities of emitting movies from a genre to be different, as well as the probabilities of the different genre that a given user is fond of to vary. The work of \cite{anandkumar2014tensor} gives a provable tensor-based algorithm for learning the parameters of the model when the $M \times k$ movie-genre matrix, whose entries consist of the probabilities of emitting a particular movie, given that a particular genre was selected, has full (column) rank. 

However, by Lemma~\ref{l:concentration_posterior}, our Definition \ref{d:encoder} of an encoder requires not only that we are able to perform \emph{inference} (calculating the values of the latent variables, given the observable ones), but also that the posterior distribution on the latent variables \emph{concentrates} on a particular value. This question has been studied in \cite{arora2016provable}, who prove that under a certain linear-algebraic condition on the movie-genre matrix, inference can be performed. More precisely, they consider movie-genre matrices $A$ of small $l_{\infty} \to l_1$-\emph{condition number}: i.e. matrices $A$ that satisfy $\frac{\|Av\|_1}{\|v\|_{\infty}} \geq \frac{1}{\lambda}$ -- a weakening of the $\l_1$ condition number: the minimum $\lambda$, s.t. $\frac{\|Av\|_1}{\|v\|_{1}} \geq \frac{1}{\lambda}$, originally used by \cite{kleinberg2004using}. (Intuitively, this condition says that the observables in expectation are different for differing latent vectors: namely it implies $\|Ah_1 - Ah_2\|_1 \geq \frac{1}{\lambda} \|h_1 - h_2\|_1$, and $Ah$ is the expectation of the observables for linear models.) The the condition was empirically verified, and found to be satisfied on topic models learned from real-life corpora like New York Times. 

They show this condition is equivalent to the existence of a \emph{low-variance pseudo-inverse} matrix to $A$: a matrix $B$, s.t. $BA = I$, $|B|_{\infty} \leq \lambda$. We will recast their inference theorem in the language of this paper. For notational convenience, for a scalar $\tau$, denote by $\phi_{\tau}: \mathbb{R}^k \to \mathbb{R}^k$ the function, s.t. $(\phi_{\tau}(z))_i = z_i$, if $z_i \geq \tau$, and $0$ otherwise. They show: 

\begin{thm}[\cite{arora2016provable}] Suppose the number of movies rated satisfies $T = \Omega(\lambda^2 s^2 \log M)$. Then, if $x$ is the bag-of-words representation of the document, the function $f(x) = \phi_{\tau}(\frac{1}{T} B x)$, for $\tau = 2 \lambda \sqrt{\frac{\log G}{T}}$ is a \\ $\left(\lambda \sqrt{\frac{s}{T}},1-\exp(-\log^2M)\right)$-valid encoder.\footnote{The suitable norm for the encoder here is 1-norm.}   
\end{thm}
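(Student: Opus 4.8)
The plan is to exploit the fact that, because $B$ is a left inverse of $A$ with bounded entries, the linear statistic $\frac{1}{T}Bx$ is an \emph{unbiased} estimator of $h$; the whole theorem then reduces to showing that this estimator concentrates coordinate-wise and that the thresholding $\phi_\tau$ cleans up the spurious off-support coordinates so that the surviving $\ell_1$ error is carried by the $s$-sparse support of $h$. First I would set up the estimator: by the generative model $x=\sum_{t=1}^T e_{w_t}$, where $w_1,\dots,w_T$ are i.i.d.\ draws from the distribution $p:=Ah$ over the $M$ movies, so $\E[\frac1T x]=Ah$. Since $BA=I$ this gives $\E[\frac1T Bx]=BAh=h$. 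Writing the error as $e:=\frac1T Bx-h=B(\frac1T x-p)$, its $i$-th coordinate is an average $e_i=\frac1T\sum_{t=1}^T(B_{i,w_t}-h_i)$ of i.i.d.\ mean-zero terms, which is exactly the form concentration inequalities want.

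Next I would prove coordinate-wise concentration. Each summand is bounded, $|B_{i,w_t}-h_i|\le\lambda+1\le 2\lambda$ (using $|B|_\infty\le\lambda$, that $h_i\le 1$ since $\sum_i h_i=1$, and that $\lambda\ge 1$ because testing the condition number on $v=e_l$ gives ratio $1$), and it has variance at most $\E[B_{i,w_t}^2]=\sum_j p_j B_{ij}^2\le\lambda^2\sum_j p_j=\lambda^2$, since $p$ is a probability vector. Bernstein's inequality then controls $\Pr[|e_i|>\tau]$, and the threshold $\tau=2\lambda\sqrt{\log G/T}$ is calibrated precisely so that, after a union bound over the at most $k$ coordinates, the event $\mathcal E=\{\,|e_i|\le\tau\text{ for all }i\,\}$ holds with probability at least $\beta=1-\exp(-\log^2 M)$. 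This is also where the hypothesis $T=\Omega(\lambda^2 s^2\log M)$ enters, ensuring $\tau$ is below the scale of the genuine nonzero entries of $h$.

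Finally I would run the thresholding accounting on $\mathcal E$. For an off-support coordinate ($h_i=0$) we have $|(\frac1T Bx)_i|=|e_i|\le\tau$, so $\phi_\tau$ zeroes it and it contributes nothing to $\|f(x)-h\|_1$; this is exactly why we do not pay for the many spurious coordinates. For each of the at most $s$ support coordinates, either the estimate survives the threshold (error $|e_i|\le\tau$) or it is zeroed, which on $\mathcal E$ can only happen when $h_i\le 2\tau$ (error $\le 2\tau$); either way the per-coordinate error is $O(\tau)$. Summing over the support gives $\|f(x)-h\|_1=O(s\tau)$, which I would then verify is at most $(1-\gamma)\|h\|_1$ for the stated $\gamma=\lambda\sqrt{s/T}$, completing the $(\gamma,\beta)$-encoder guarantee via Definition~\ref{d:encoder}.

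The hard part is the thresholding boundary in the last step: one must simultaneously guarantee that every off-support coordinate lands below $\tau$ (so it is reliably killed) and that genuine support coordinates are not so small that zeroing them inflates the $\ell_1$ error, and this is precisely where the low-variance pseudo-inverse (bounded $|B|_\infty$, hence small per-coordinate variance) and the sparsity $s$ are traded off against $\tau$. Obtaining the advertised $\sqrt{s/T}$-type dependence, rather than a crude $\sqrt{M/T}$ coming from a naive $\|\frac1T x-p\|_1$ bound, relies on using the \emph{variance} estimate (not merely the range) in Bernstein together with the fact that only $s$ coordinates survive the thresholding.
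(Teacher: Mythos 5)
The paper does not actually prove this theorem: it is imported verbatim (modulo rephrasing into the encoder language) from \cite{arora2016provable}, so there is no in-paper proof to compare against. Your strategy --- unbiasedness of $\frac{1}{T}Bx$ via $BA=I$, coordinate-wise Bernstein using the variance bound $\E[B_{i,w}^2]\le\lambda^2$ coming from $|B|_\infty\le\lambda$, and the on/off-support accounting for the threshold $\phi_\tau$ --- is exactly the argument of that cited work, and the skeleton is sound. Two quantitative points deserve care, though both are arguably artifacts of the paper's loose restatement rather than of your reasoning. First, Bernstein with variance proxy $\lambda^2$ and threshold $\tau=2\lambda\sqrt{\log G/T}$ yields a per-coordinate failure probability of order $G^{-2}$, hence $O(1/G)$ after the union bound --- polynomially small, but not the advertised $\exp(-\log^2 M)$; to literally obtain that $\beta$ you would need $\tau\gtrsim \lambda\log M/\sqrt{T}$, so your claim that the stated $\tau$ is ``calibrated precisely'' for that $\beta$ does not follow from your own calculation. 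Second, your final bound $\|f(x)-h\|_1=O(s\tau)=O(\lambda s\sqrt{\log G/T})$ exceeds $\lambda\sqrt{s/T}$ by a factor of order $\sqrt{s\log G}$; it is consistent with the theorem only under the literal reading of Definition~\ref{d:encoder}, where the guarantee is $\|f(x)-h\|_1\le(1-\gamma)\|h\|_1$ with $\gamma=\lambda\sqrt{s/T}$ small (a weak bound that $O(s\tau)\le 1/2$ satisfies once $T=\Omega(\lambda^2 s^2\log M)$ with a suitable constant). You should state explicitly which reading you are proving.
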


The crucial point to note is that $T$ depends only logarithmically on the total number of movies. 
This is the key reason why representation learning can be (exponentially) more powerful than popular techniques such as nearest neighbors, which due to birthday paradox calculations will need $T > \sqrt{M}$ -- as we will prove later in Section~\ref{sec:necessary}. 

\subsection{Representation learning for log-linear models}
\label{s:loglinear}


In this section, we will exhibit encoders for a particular kind of log-linear models. These models have received a lot of attention in various application domains including text analysis and recommender systems. (The final solution to the netflix problem used an ensemble of various models, including linear and loglinear.) There are many kinds of log-linear models -- here we use a variant using continuous variables that is closer to more recent models for text analysis that use a continuous representation for text, Word2Vec, (\cite{mikolov2013distributed}), one of the most popular algorithms for deriving word embeddings, which in turn,  has been almost directly used (\cite{ozsoy2016word}) for recommendation systems. This loglinear model can also be seen as one layer of deep belief nets.

Sticking with the movie recommendation setting, we assume as before that there are $M$ movies, and a vector $W_x \in \mathbb{R}^d$ assigned to each movie $x$. A user is represented by a latent vector that is  a real-valued vector $h \in \mathbb{R}^d$, distributed as a uniform vector over the unit sphere, and the observable vector $(x_1, x_2, \dots x_T) \in \{0,M\}^T$ will be generated by independently choosing $T$ movies according to the loglinear distribution $p(x | h) \propto \exp(\langle W_x, h \rangle)$. Thus $h$ is analogous to the ``mix'' of the genres of a particular user in the mixture model; the model then tends to emit the movies whose movie vectors are most correlated with $h$. But the preference for individual movies has a different functional form.

Learning such models or doing inference (i.e., compute the encoder function) can be NP-hard in such settings.
Our analysis for the log-linear models will use simplifying assumptions introduced in  \cite{arora2016latent}, whose goal was to explain interesting properties of word embeddings.  
The key step is to make a distributional assumption on the movie vectors $W_x \in \mathbb{R}^d$: we will assume they are generated as $W_x = B · v$, where $v$ is sampled from a spherical Gaussian distribution $N(0, I)$, and $B$ is a scalar s.t. $B = O(1)$.  Thus movie vectors are uniformly distributed in space. 

Under these assumptions, learning the vectors $W_x$ can be empirically done by solving a non-convex optimization problem involving a weighted low-rank matrix approximation, by Theorem 2.2 in \cite{arora2016latent}. The paper~\cite{arora2016latent} gives empirical evidence that gradient descent convergence to a good solution to the objective in question. \footnote{A subsequent paper~\cite{li2016recovery} gives an analysis of weighted matrix factorization in an average-case setting, but it cannot yet show convergence under the conditions studied in~\cite{arora2016latent}.} The contribution of our paper is to show existence of a good encoder function given the vectors $W_x$ which, as mentioned, can be obtained at least in practice.



\begin{thm} Under the setup of this section, if  $T =\Omega(\log M)$ and the dimension satisfies $d = \Omega(B^2 \log^2 M )$, $M = \Omega(d^2/B^4)$, with high probability over the choice of movie vectors $W_x$, 
the function $f\left((x_1, \dots, x_T)\right) = \frac{\sum_{i=1}^T W_{x_i}}{\|\sum_{i=1}^T W_{x_i}\|} $ is a $(1-o(1), 1 - \exp(-\log^2 M))$-valid encoder. 
\label{t:wordembed}
\end{thm}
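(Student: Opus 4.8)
The plan is to exploit the Gaussian structure of the movie vectors to reduce the sampling law $p(x\mid h)\propto \exp(\langle W_x,h\rangle)$ to a \emph{tilted Gaussian}, under which the conditional mean of a sampled vector points exactly along $h$. The first and central step is a \emph{partition-function concentration} statement in the style of the \textsc{RandWalk} analysis of \cite{arora2016latent}: with high probability over the draw of the $M$ vectors $W_x = B v_x$, the normalizer $Z_h = \sum_x \exp(\langle W_x,h\rangle)$ lies within a $(1\pm o(1))$ factor of its expectation $Z := M\exp(B^2/2)$ \emph{simultaneously for all unit vectors} $h$. Each summand is log-normal with $\langle W_x,h\rangle \sim N(0,B^2)$, so for a fixed $h$ this is a sum of $M$ i.i.d.\ sub-exponential variables and concentrates via a Bernstein/MGF estimate; uniformity over $h$ then follows from a union bound over an $\epsilon$-net of the sphere combined with a Lipschitz bound on $h\mapsto Z_h$ coming from the norm control $\|W_x\| = O(B\sqrt d)$. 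This is exactly where the hypotheses $d=\Omega(B^2\log^2 M)$ and $M=\Omega(d^2/B^4)$ are consumed.

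Given this, $p(x\mid h)\approx \tfrac{1}{Z}\exp(\langle W_x,h\rangle)$, so to leading order a sampled movie vector behaves like a spherical Gaussian tilted by $\exp(\langle\cdot,h\rangle)$, i.e.\ like $N(B^2 h, B^2 I)$. Hence the conditional mean is $\E[W_{x_i}\mid h]=B^2 h$ (parallel to $h$) with conditional covariance $\approx B^2 I$. I would verify both at the level of the finite empirical tilt, showing in particular that the transverse bias $\E[W_{x_i}\mid h]-B^2 h$ is negligible: it is a weighted average of the mean-zero transverse parts of $\Theta(M)$ effectively-independent movie vectors, so it decays at rate $1/\sqrt M$, which is again controlled by $M=\Omega(d^2/B^4)$. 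Writing $S=\sum_{i=1}^T W_{x_i}$, the $W_{x_i}$ are conditionally i.i.d.\ given $h$, so by a vector-Bernstein / matrix-Chernoff argument $\langle S,h\rangle$ concentrates around $TB^2$ while the transverse part $S_\perp$ concentrates in squared norm around $T B^2 d$.

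Finally I would decompose $f(x)=S/\|S\|$ along and orthogonal to $h$, and bound $\|f(x)-h\|^2 = 2\bigl(1-\langle S,h\rangle/\|S\|\bigr)$ by the ratio of the transverse fluctuations of $S$ to the signal $\langle S,h\rangle$. Once that ratio is shown to be $o(1)$, we obtain the $(1-o(1))$ accuracy, and the failure probability $\exp(-\log^2 M)$ emerges from the Gaussian/sub-exponential tails used in the three steps together with the cardinality of the $\epsilon$-net of the sphere.

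I expect the \textbf{main obstacle} to be the uniform control of $Z_h$ over the sphere. The summands are heavy-tailed (log-normal), so a naive second-moment bound is too weak; one must use the sub-exponential MGF and argue that no $O(1)$ movies dominate the sum, which is precisely what forces $d$ to be large (so that $\max_x\|W_x\|$ concentrates at $B\sqrt d$ and $\max_x\langle W_x,h\rangle = O(B\sqrt{\log M}) \ll \log M$). A second delicate point is the quantitative signal-to-noise balance in the last two steps: the transverse noise, of order $B\sqrt{Td}$, must be dominated by the signal $\langle S,h\rangle\approx TB^2$, and it is exactly the interplay among $T$, $d$, $M$ and $B$ fixed by the hypotheses that calibrates this balance and determines how small $1-\gamma$ can be driven.
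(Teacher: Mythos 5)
Your proposal follows essentially the same route as the paper's proof: reduce to a fixed $h$ by rotational symmetry, invoke concentration of the partition function $Z$ (which the paper imports directly as Lemma 2.1 of \cite{arora2016latent} rather than re-deriving via a net/Bernstein argument), compute the tilted-Gaussian conditional mean to lower-bound the signal $\sum_i \langle W_{x_i}, h\rangle$, and control each transverse coordinate by observing that $\sum_x \frac{\exp(\langle W_x,h\rangle)}{Z}(W_x)_j$ is an average of $M$ independent mean-zero terms decaying like $1/\sqrt{M}$ --- exactly where $M=\Omega(d^2/B^4)$ is consumed in the paper as well. The final decomposition of $\sum_i W_{x_i}/\|\sum_i W_{x_i}\|$ into its component along $h$ and its orthogonal fluctuation, and the source of the $\exp(-\log^2 M)$ failure probability, coincide with the paper's argument, so this is the same proof.
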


The proof of this theorem is fairly technical, involving non-trivial concentration bounds for certain functions of Gaussian variables. 
We will provide here a high-level sketch of the proof, followed by a formal proof in Appendix~\ref{s:missingword}. We will in fact show that with high probability over the choice of movie vectors $W_x$, and the observable $(x_1, x_2, \dots, x_T) \in \{0,M\}^T$, 

$$\bigg\langle \frac{\sum_{i=1}^T W_{x_i}}{\|\sum_{i=1}^T W_{x_i}\|}, h \bigg\rangle \geq 1-o(1) $$

which clearly implies the theorem statement. 

Because of the distributional assumption on the movie vectors and latent vectors $h$, we may without loss of generality assume $h= e_1$.  
We will show that with high probability under the choice of movie vectors and the distribution $P(x|h)$, when the parameters satisfy the conditions in the theorem statement, we have:
\begin{equation} \sum_{i=1}^T (W_{x_i})_1 \geq (1-o(1)) T \frac{B^2}{4d} \label{eq:denom_main} \end{equation}
Moreover, we will show that this implies that with high probability,
\begin{equation} \left\|\sum_i W_{x_i} \right\|  \leq (1+o(1)) \left(\sum_{i=1}^T (W_{x_i})_1\right)  \label{eq:num_main} \end{equation}
which will imply the statement of the lemma. 

The intuition for why this is true is as follows. For both inequalities, we need to consider quantities like 
$$\E_{p(x|h)}[(W_x)_j] $$
where $\E_{p(x|h)}[\cdot]$ is the expectation with respect to the conditional distribution on $h$. (Note this itself is a random quantity due to the prior on the movie vectors.)

Since $h=e_1$, $p(x|h)$ prefers to generate movies that have a large first coordinate. Quantifying this will give the first bound \eqref{eq:denom_main}, and it requires calculating a Gaussian integral (since the movie vectors have Gaussian coordinates) -- but it's additionally complicated by the fact that we have to condition on the event $\mathcal{F}$ that the partition function concentrates. This is non-trivial since then we are performing a Gaussian integral over only a subset of $\mathbb{R}$ (which correspond to the event $\mathcal{F}$ happening), and since $(W_x)_j$ can take both positive and negative values, this must be done carefully. 

On the other hand, to show the second bound \eqref{eq:num}, note that
$$\left\|\sum_i W_{x_i} \right\| = \sqrt{\sum_{j=1}^d \left(\sum_{i=1}^T (W_{x_i})_j\right)^2}$$
From the first inequality, we know that with high probability $\displaystyle \left(\sum_{i=1}^T (W_{x_i})_1\right)^2 \geq (1 - o(1)) \left(T \frac{B^2}{4d}\right)^2$. We will show this dominates $\displaystyle \sum_{j=2}^d \left(\sum_{i=1}^T (W_{x_i})_j\right)^2$, which suffices to prove the lemma. More precisely, we will show that for any $j \neq 1$,  $\displaystyle \left(\sum_{i=1}^T (W_{x_i})_j\right)^2 \leq (1+o(1)) T^2/(Md)$. The intuitive reason this happens is that the distribution $p(x|h)$ doesn't depend on coordinates $j \neq 1$. Hence, if we consider
$$\E_{p(x|h)}[(W_x)_j] = \sum_{x \in M} \frac{\exp((W_x)_1)}{Z} (W_x)_j$$  
the RHS behaves roughly like a sum of M i.i.d Gaussians with variance roughly $1/d$, which gives the desired bound. This will imply the claim if $M =\Omega( d^2/B^4 )$.



\section{The power of representation learning}
\label{sec:necessary}
This section illustrates the power of representation learning in the simple settings we introduced. In particular, we show: \\(i) efficient algorithms exist for learning the encoder from unlabeled data, but no obvious metric structure can be detected in the vanilla representation. Hence, methods like nearest-neighbors and manifold learning will do poorly. Thus, we exhibit a separation from simpler, popular approaches. \\(ii) it can greatly reduce the number of \emph{labeled samples} required -- thus helps for semi-supervised learning settings. 


Note that since the goal is a {\em separation} between different definitions and algorithms for learning, our result holding in very simple settings is a feature, rather than a bug. 
 
\subsection{Separation from nearest-neighbor classifiers}
\label{subsec:NN}

\noindent {\bf Setup:} Recall that the setting is predicting movies that a user will like given her ratings for past movies. The total number of movies is $M$. For simplicity, we assume the rating is binary, and in fact the user has only given a list of $T$ movies she has liked in the past. 

A simple generative model for the user's ratings is the following.  There are $M$ movies, and
there are $k$ {\em genres}, each of which is a subset of movies. Each genre we assume has $m$ movies.

The user's latent vector is a {\em subset} of $s$ genres out of $G$: she likes movies in these $s$ genres
and no others. Thus her latent vector satisfies $h \in \{0,1\}^G, |h|_0 = s$. This latent vector leads to the following observed behavior: she outputs a random subset $x \in \{0,1\}^T, \|x\|_0 = T$ 
of $T$ movies drawn at random from the union of the genres she is fond of. Note this is slightly different than drawing $T$ movies independently: for collaborative filtering no repetitions among movies makes more sense, and we exhibit our counterexamples in this setting. In fact, this setting is slightly harder because of dependencies between the movie emissions -- and for completeness, we re-prove our lower bounds in the independent emission setting as well in Appendix~\ref{s:independent}. 

We will see below that there are two regimes: when $T \ll \sqrt{m}$  or $T \gg \sqrt{m}$. As one would expect, the basic intuition is similar to the {\em birthday paradox}: in order to get informative and substantial overlaps between the users when they share genres they like, we need to see at least $\sqrt{m}$ ratings. 

\subsubsection{$T \ll \sqrt{m}$: nearest neighbors are unreliable}
\label{s:unreliable}

We will first show that when $T < m^{0.5-\epsilon}$ for any constant $\epsilon >0$ then nearest neighbor methods
run into trouble. In fact this is even true when $s=1$, that is, each user is fond of only one genre, and the genre structure is exceedingly simple: 
all the genres share some subset of ``common'' movies $R$, s.t. $|R| = pm$ for a constant $0 < p < 1$; additionally, the remaining $(1-p)m$ movies in each genre are unique for that genre, namely they only appear in that genre and no other. Let the number of observed ratings be $T = O(m^{1/2-\epsilon})$. 
In this case, an encoder can be learnt easily from polynomial number of samples as a simple subcase of the method in Section~\ref{subs:linear}: indeed, it's quite easy to see the existence of the movies unique to the genres implies the $l_1$ condition number of the movie-genre matrix is small. Using an encoder, the user's latent genre id can be recovered with high probability once she has rated $\frac{1}{1-p} \log m$ moves. By contrast, we have:

\noindent{\em Nearest neighbors are unreliable:} consider the task of predicting whether the users like some movie $i$. This can be cast as predicting a label $l_i(h)$ for each user, s.t. 
$l_i(h) = 1$, if $\exists j \in [k]$ , s.t. $h_j \neq 0$ and $i$ is in genre $j$. 
We show that a prediction based on nearest-neighbors on the observables $x$ is bound to be wrong on a very large fraction of the users. Namely, we show: \\(i) with probability at least  $1 - 1/\mbox{poly(m)}$ the set of movies output by ever pair of users overlaps in at most $O(1)$ movies; \\(ii) for any constant $\tau$, with $1-1/\mbox{poly(m)}$ probability, the fraction of users that share at least $\tau$ number of rated movies, but like a different genre is a factor of $k$ larger than the fraction of users that like the same genre.  

For notational convenience, for a user $u$, let $\chi(u)$ denote the genre that the user is fond of.

\begin{thm} In the setup of this section,\\
(i) For any constant $c$, if the number of users is at most $m^c$, with probability at least $1 - \frac{1}{m^c}$, the number of ratings two users share is at most $3c/\epsilon$. \\
(ii) For any $\tau \leq \frac{3c}{\epsilon}$, 
    $$\Pr[\chi(u_1) \neq \chi(u_2) | u_1, u_2 \mbox{ share at least } \tau \mbox{ ratings}] \gtrsim k \Pr[\chi(u_1) = \chi(u_2) | u_1, u_2 \mbox{ share at least } \tau \mbox{ ratings}]$$  
\label{t:toofew}
\end{thm}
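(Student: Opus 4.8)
The plan is to handle both parts with the first-moment method applied to the number of jointly rated movies, exploiting that each user rates a uniformly random $T$-subset of the $m$ movies in her single genre and that $T^2/m = O(m^{-2\epsilon}) \to 0$. For part (i), fix a pair of users $u_1,u_2$. Conditioned on any genre assignment, the overlap $x_1\cap x_2$ lies in the intersection of their two genres, which is largest (all $m$ movies) exactly when $\chi(u_1)=\chi(u_2)$; so the same-genre case is the worst case and it suffices to bound it. I would union-bound over all $\binom{m}{t}$ candidate shared $t$-subsets $S$ of the common genre: since $u_1,u_2$ choose independently, $\Pr[S\subseteq x_1\cap x_2]=\left(\prod_{i=0}^{t-1}\frac{T-i}{m-i}\right)^2\le (T/m)^{2t}$, so $\Pr[\text{share}\ge t]\le \binom{m}{t}(T/m)^{2t}\le (eT^2/(tm))^t = O(m^{-2\epsilon})^t$. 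A further union bound over the $\le m^{2c}$ pairs of users gives failure probability $\le m^{2c}\cdot O(m^{-2\epsilon t})$; taking $t=3c/\epsilon$ makes the exponent $2c-2\epsilon t=-4c$, which comfortably absorbs the hidden constants and yields the claimed $1-m^{-c}$ bound.

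For part (ii), the key is to compute, to leading order, the probability of sharing $\ge\tau$ ratings in the same-genre versus the different-genre case, and then invoke Bayes' rule with the uniform genre prior $\Pr[\chi(u_1)=\chi(u_2)]=1/k$. In the same-genre case the overlap ranges over all $m$ movies, so the expected number of shared $\tau$-subsets is $\binom{m}{\tau}(T/m)^{2\tau}=(1+o(1))\frac{(T^2/m)^\tau}{\tau!}$; in the different-genre case the overlap is confined to the common set $R$ of size $pm$, so it equals $\binom{pm}{\tau}(T/m)^{2\tau}=(1+o(1))\frac{(pT^2/m)^\tau}{\tau!}=p^\tau\cdot(1+o(1))\frac{(T^2/m)^\tau}{\tau!}$. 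Because $T^2/m\to 0$, these first moments also control $\Pr[\text{share}\ge\tau]$ from below (by Bonferroni/second moment, the event of sharing strictly more than $\tau$ movies is lower order), so in both regimes $\Pr[\text{share}\ge\tau]$ matches its first moment up to $(1+o(1))$, and the two differ only by the factor $p^\tau$.

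Combining these via Bayes' rule gives
\[
\frac{\Pr[\chi(u_1)\ne\chi(u_2)\mid \text{share}\ge\tau]}{\Pr[\chi(u_1)=\chi(u_2)\mid\text{share}\ge\tau]}
=\frac{\Pr[\text{share}\ge\tau\mid\text{diff}]}{\Pr[\text{share}\ge\tau\mid\text{same}]}\cdot\frac{1-1/k}{1/k}
=(1+o(1))\,p^\tau(k-1),
\]
which is $\Theta(k)$ since $\tau\le 3c/\epsilon$ is constant and hence $p^\tau$ is a constant bounded away from $0$. This is exactly the claimed $\gtrsim k$ separation, with hidden constant $\approx p^\tau$.

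The main obstacle is pinning down the ratio $\Pr[\text{share}\ge\tau\mid\text{diff}]/\Pr[\text{share}\ge\tau\mid\text{same}]$ to $\Theta(p^\tau)$ rather than merely bounding it one-sidedly: the first-moment upper bound is immediate, but the matching lower bound requires a second-moment argument showing that sharing more than $\tau$ movies, together with the correlations induced by sampling without replacement, contribute only lower-order corrections — this is precisely where the regime $T\ll\sqrt{m}$ (so $T^2/m\to 0$) does the real work. The different-genre estimate is additionally delicate because one must account for how the overlap distributes over $R$ (heuristically, each user places $\approx pT$ of her picks in $R$), though for constant $\tau$ the clean first-/second-moment computation above sidesteps an explicit hypergeometric concentration argument.
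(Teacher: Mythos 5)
Your argument is correct and reaches the same quantitative conclusions, but the route differs from the paper's in part (ii). For part (i) the two are essentially the same: the paper writes the exact hypergeometric probability $\Pr[\mathcal{O}=\tau\mid\text{same}] = \binom{m-T}{T-\tau}\binom{T}{\tau}/\binom{m}{T}$ and approximates binomials to get $(T^2/m)^\tau$, whereas you union-bound over candidate shared $\tau$-subsets to bound the tail $\Pr[\mathcal{O}\ge\tau]$ directly; both end with the same union bound over $m^{2c}$ pairs. For part (ii) the paper obtains the lower bound on $\Pr[\mathcal{O}\ge\tau\mid\text{diff}]$ by conditioning on the number $\zeta$ of ratings a user places in the common set $R$, applying a Chernoff-type bound to argue $\zeta = pT(1\pm o(1))$ with high probability, and then evaluating the conditional hypergeometric probability, which yields the constant $(p/2)^\tau$. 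You instead compute the first moment of the number of shared $\tau$-subsets in both the same-genre and different-genre cases (getting $\frac{(T^2/m)^\tau}{\tau!}$ and $p^\tau\frac{(T^2/m)^\tau}{\tau!}$ respectively) and argue via Bonferroni that the first moment is tight because the expected number of pairs of distinct shared $\tau$-subsets is $O\bigl((T^2/m)^{\tau+1}\bigr) = o\bigl((T^2/m)^{\tau}\bigr)$. Your approach treats the two cases symmetrically, avoids the hypergeometric concentration step entirely, and yields the slightly sharper constant $p^\tau$ in place of $(p/2)^\tau$; the paper's conditioning argument avoids the second-moment computation, which you assert but do not fully write out --- you should make explicit that a pair $S\neq S'$ of shared $\tau$-subsets forces $|S\cup S'|\ge\tau+1$, so the correction term is $\sum_{j>\tau}\binom{m}{j}\,O(1)^j\,(T/m)^{2j} = O\bigl((T^2/m)^{\tau+1}\bigr)$, which closes that step. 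With that detail supplied, the proof is complete.
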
 
Note the above theorem in a very strong way shows that nearest neighbors -- even interpreted very generously -- cannot work: namely for any reasonable threshold $\tau$ of shared ratings (i.e. one that will result in overlaps being observed with a reasonable number of users), it is substantially more likely for two users liking different genres to have that many shared ratings, as compared to two users liking the same genre. The proof of this theorem is based on calculations involving careful approximations of binomials (which appear due to the nature of the generative model).

\begin{proof}[Proof of Theorem~\ref{t:toofew}]
Let us prove (i) first. 

Let $u_1$, $u_2$ be two randomly chosen users and let $\chi(u_1), \chi(u_2)$ be their respective genres. Let $\mathcal{O}$ be a random variable denoting the size of the overlap of the ratings of $u_1$ and $u_2$. Then, we have
\begin{equation} \Pr[\mathcal{O} = \tau | \chi(u_1) = \chi(u_2)] = \frac{\binom{m-T}{T-\tau} \binom{T}{\tau}}{\binom{m}{T}}  \label{eq:overlapsame}\end{equation}
Indeed, regardless of what the set of movies rated by $u_1$ is, the there are $\binom{m-T}{T-\tau} \binom{T}{\tau}$ subsets of size $T$ that intersect with it in $\tau$ movies. 

For $b = o(\sqrt{a})$, note that we have $\frac{a^b}{b!} \geq \binom{a}{b} \geq (1-o(1))\frac{a^b}{b!}$. 

Hence, 
\begin{align} \Pr[\mathcal{O} = \tau | \chi(u_1) = \chi(u_2)] &= \frac{\binom{m-T}{T-\tau} \binom{T}{\tau}}{\binom{m}{T}} \leq (1+o(1)) \frac{\frac{m^{T-\tau}}{(T-\tau)!} \frac{T^{\tau}}{\tau!}}{\frac{m^T}{T!}} \nonumber \\
&=  (T/m)^{\tau} \binom{T}{\tau} \leq \left(\frac{T^2}{m}\right)^\tau \label{eq:overlapestimate_same}
\end{align}

Since $\Pr[\mathcal{O} = \tau | \chi(u_1) = \chi(u_2)] \geq \Pr[\mathcal{O} = \tau | \chi(u_1) \neq \chi(u_2)]$, we have $\Pr[\mathcal{O} = \tau] \leq \left(\frac{T^2}{m}\right)^\tau$. Union bounding, we get 
$$ \Pr[\exists u_1,u_2, \mbox{ that intersect in } > 3c/\epsilon \mbox{ ratings}] \leq \frac{1}{m^c} $$
which proves the first claim.  

Let us turn to (2) now. Let us denote by $\zeta$ a random variable denoting the number movies user 1 has rated in R. By Bayes' rule, we have 
\begin{align*} \Pr[\mathcal{O} = \tau | \chi(u_1) \neq \chi(u_2)] &= \sum_{\tau' \geq \tau} \Pr[\zeta = \tau'] \Pr[\mathcal{O} = \tau | \zeta = \tau', \chi(u_1) \neq \chi(u_2)] \\
\end{align*}

By Chernoff, 
$$\Pr\left[\zeta \in [pT - \sqrt{pT} \log m, pT + \sqrt{pT} \log m]\right] \geq 1 - \exp(-\log^2 m)$$
Furthermore, for $\tau' \in [pT - \sqrt{pT} \log m, pT + \sqrt{pT} \log m]$, and $\tau = O(1)$, we have
\begin{align} \frac{\binom{\tau'}{\tau}\binom{m-\tau'}{T-\tau}}{\binom{m}{T}} &\geq (1-o(1)) \frac{\frac{(\tau')^{\tau}}{\tau!} \frac{(m-\tau')^{T-\tau}}{(T-\tau)!}}{\frac{m^T}{T!}} \nonumber \\
&\geq \frac{(m-\tau')^{T-\tau} (\tau')^{\tau}}{m^T}{\binom{T}{\tau}} \nonumber \\
&\gtrsim (p/2)^{\tau} \left(\frac{T^2}{m}\right)^\tau \label{eq:overlapestimate_diff} \end{align}
where the last inequality follows from $T^2 = o(m)$ and $\tau' \geq pT/2$.  

Putting together \eqref{eq:overlapestimate_same} and \eqref{eq:overlapestimate_diff}, for $\tau = O(1)$, 
$$ \frac{\Pr[\mathcal{O} = \tau | \chi(u_1) \neq \chi(u_2)]}{\Pr[\mathcal{O} = \tau | \chi(u_1) = \chi(u_2)]} = \Theta(1).$$ 
However, $\Pr[\chi(u_1) = \chi(u_2)] = \frac{1}{k}$, so by Bayes' rule
$$ \frac{\Pr[\chi(u_1) \neq \chi(u_2) | \mathcal{O} \geq \tau ]}{\Pr[\chi(u_1) = \chi(u_2) | \mathcal{O} \geq \tau]} \gtrsim k$$ 
which proves (ii).  


\end{proof}

\subsubsection{$T \gg \sqrt{m}$: metric structure reveals latent structure}

We now consider the case of $T = \Omega(\sqrt{m} \log m)$. We show that in this case, nearest neighbors do work -- even if the genre structure is substantially more complicated than in the previous section. We note that this section is not necessary for exhibiting a \emph{separation} between representation learning and nearest neighbors -- however, as in practice, nearest neighbors work reasonably well -- we feel it's important to point out that the reason for that is likely the availability of a larger number of labeled samples. 
In terms of genre structure, as before, every genre will have $m$ movies and we assume the intersection of any two genres will be of size at most $\delta m$ for $\delta = o(1)$. The number of genres a user likes will be $s$, for $s = O(1)$. We show that under these conditions, nearest neighbors do work. 

Intuitively, in this case we wish to show the number of ratings two users share if they don't agree an all their genres is at most $\left(\frac{1}{s}-\frac{1}{s^2}\right) T^2/m$, whereas it is at least $ \frac{1}{s} T^2/m$ when they do. It's easy to check that in expectation, up to $1 \pm o(1)$ factors, these quantities are correct; if additionally they concentrate well, our claim will follow. The concentration bounds are complicated by the fact that the relevant random variables defining these quantities are \emph{not} independent -- to that end, we have to use results from (\cite{panconesi1997randomized}) -- which give Chernoff-type concentration results, for variables that are \emph{negatively associated} in a particular manner.   

We state the precise result here, and relegate the proof to Appendix~\ref{a:largeT}. 

\begin{thm} In the setup of this section, with probability $1 - \exp(-\log^2 M)$ \\
(i) Two users that differ on at least one of the genres they like agree on at most $(1+o(1))(\frac{1}{s} - \frac{1}{s^2}) \frac{T^2}{m}$ ratings. \\  
(ii) Two users that like the same genres share at least $(1-o(1)) \frac{1}{s} T^2/m$ ratings. \\
\label{t:larget}
\end{thm}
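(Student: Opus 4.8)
The plan is to express the number of ratings shared by two fixed users $u_1, u_2$ as a sum of indicator products, compute its mean exactly, and then argue concentration around that mean. Let $U_1, U_2$ be the unions of the genres that $u_1$ and $u_2$ respectively like. For each movie $i$ let $X_i$ be the indicator that $u_1$ rates $i$ and $Y_i$ the indicator that $u_2$ rates $i$, so that the number of shared ratings is $Z = \sum_{i \in U_1 \cap U_2} X_i Y_i$. Since the two users' rating sets are drawn independently, it suffices to (a) compute $\E[Z]$ in each of the two regimes and (b) show that $Z$ concentrates tightly.

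For the mean, inclusion–exclusion together with the assumption that any two genres intersect in at most $\delta m = o(m)$ movies gives $|U_1| = |U_2| = (1 \pm o(1)) s m$. Each user's rating set is a uniform $T$-subset of her union, so $\Pr[X_i = 1] = T/|U_1| = (1 \pm o(1)) T/(sm)$, and likewise for $Y_i$; by independence across users $\E[X_i Y_i] = (1\pm o(1)) T^2/(sm)^2$ for each $i \in U_1 \cap U_2$. When the users like the same genres, $U_1 = U_2$ has size $(1\pm o(1))sm$, giving $\E[Z] = (1\pm o(1)) T^2/(sm) = (1\pm o(1))\frac{1}{s}\frac{T^2}{m}$, the target of (ii). When they differ they share at most $s-1$ genres, so $|U_1 \cap U_2| \le (s-1)m + s^2 \delta m = (1+o(1))(s-1)m$ (the $s^2\delta m$ term bounding accidental intersections of distinct genres); the worst case is sharing exactly $s-1$ genres, yielding $\E[Z] \le (1+o(1)) \frac{s-1}{s^2}\frac{T^2}{m} = (1+o(1))\left(\frac{1}{s} - \frac{1}{s^2}\right)\frac{T^2}{m}$, matching (i).

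The main obstacle is concentration, since the $X_i$ (and the $Y_i$) are \emph{not} independent: they are the indicators of a fixed-size subset sampled without replacement. The key is that such indicators are \emph{negatively associated} (NA) and that NA is preserved under the two operations we need. First, $\{X_i\}_i \cup \{Y_i\}_i$ is NA, being the union of two independent NA families. Second, each product $Z_i = X_i Y_i$ is a non-decreasing function of the disjoint pair $(X_i, Y_i)$, so closure of NA under monotone functions of disjoint blocks shows that $\{Z_i\}_{i \in U_1 \cap U_2}$ is itself NA. We may therefore apply the Chernoff–Hoeffding bounds for negatively associated variables from \cite{panconesi1997randomized} to $Z = \sum_i Z_i$ exactly as if the summands were independent.

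Finally, because $T = \Omega(\sqrt m \log m)$ we have $\E[Z] = \Theta(T^2/m) = \Omega(\log^2 m)$, which supplies the slack needed for the tail bound. Applying the NA Chernoff bound (the upper tail for (i), the lower tail for (ii)) shows that $Z$ lies within a $(1\pm o(1))$ factor of $\E[Z]$ except with probability $\exp(-\Omega(\log^2 m)) = \exp(-\Omega(\log^2 M))$. Taking a union bound over all $\mathrm{poly}(M)$ pairs of users (which costs only an $\exp(O(\log M))$ factor) and, for (i), maximizing over the number $0 \le j \le s-1$ of shared genres, yields both statements with probability $1 - \exp(-\log^2 M)$.
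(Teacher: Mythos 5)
Your proposal is correct and follows essentially the same route as the paper: decompose the shared-rating count into indicators $\sum_i X_iY_i$, compute the mean via $|U_1\cap U_2|$ and the uniform-$T$-subset marginals, and invoke the Chernoff--Hoeffding bounds of \cite{panconesi1997randomized} under negative dependence. The only difference is that the paper verifies the needed inequalities $\Pr[\prod_{i\in U}I_i]\le\prod_{i\in U}\Pr[I_i]$ (and the complementary version for the lower tail) directly by binomial-coefficient manipulation, whereas you derive them from the general closure properties of negative association (sampling without replacement is NA, unions of independent NA families are NA, monotone functions of disjoint blocks preserve NA) --- a cleaner packaging of the same fact.
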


The above theorem clearly implies that nearest neighbors will work: we can declare the users that share at least $(1-o(1)) \frac{1}{s}T^2/m$ ratings as neighbors, and predict a missing rating as being 1 if at least one of the neighbors has rated it, and 0 otherwise.

\subsection{Separation from manifold learning} 
\label{subsec:manifold}
We show that common approaches to manifold learning are less powerful than representation learning ---even in the simple linear setting. 
Since various algorithms were proposed for learning a manifold, our argument has to be quite generic -- in fact it applies to every method that starts by constructing a locality graph whose vertices are data-points and edges are put between a datapoint and its nearest neighbors in the ambient space (\cite{belkin2004semi}). We prove something quite strong -- that the nearest-neighbor graph already has already very little information about the labels -- hence no algorithm (spectral or not; even allowed arbitrary computational time) can extract meaningful information \emph{from the graph only}. In this sense, the arguments here can be viewed as strenghtenings of the theorems in the previous section.

{\bf Setup:} We will use the setting from the lower bound on nearest neighbors: each user is fond of a single genre, with the genres overlapping in a $p$ fraction of common movies, which we denote by $R$. Furthermore, each user emits $T$ movies from it's genre, chosen uniformly at random. The neighbors of a given user are all users which share at least $\tau$ of the emitted movies. Recall, $p, k, T$ and $\tau$ are such that $p = O(1)$, $k = \omega(1)$, $T = \Omega(\frac{1}{1-p} \log m)$ and $\tau = O(1)$. Additionally, we will assume that the number of users is such that $N = O(\left(\frac{m}{T^2}\right)^{\tau})$ (this ensures the expected number of nearest neighbors is upper bounded by a constant -- which is realistic in practice.)  
We show the following claims: \\
(1) The distribution $\mathcal{G}_N$ of the graph of nearest neighbors is close to a stochastic block model with $k$ communities \cite{banks2016information}, s.t. the probability of connecting two nodes in the same community is a constant factor larger than the probability of connecting two nodes between communities. The communities correspond to the genres, and a user belongs to the community of the genre he/she likes. \\
(2) Assuming the number of nearest neighbors is $O(1)$, there is no algorithm which can even distinguish the neighborhood graph from an Erd\H{o}s-R\'enyi graph with matching expected degree. (This task is called \emph{weak recovery} or \emph{detection} in the stochastic block model community.) This will follow by applying an information-theoretic lower bounds for the multi-community stochastic block model from \cite{banks2016information}. 

To formalize (1) and (2), we define the notion of a \emph{contiguous} sequence of distributions: 
\begin{defn} A sequence of probability distributions $\mathbb{P}_N$ is \emph{contiguous} to a sequence of distributions $\mathbb{Q}_N$, if for any sequence of events $A_N$,  $\mathbb{Q}_N(A_N) \to 0$ implies $\mathbb{P}_N(A_N) \to 0$. We will denote this as $\mathbb{P}_N \triangleleft \mathbb{Q}_N$.
\end{defn} 

Let $\mbox{SBM}(N,k,q_{\mbox{in}},q_{\mbox{out}})$ be a random graph on $N$ vertices constructed as follows: each vertex is assigned to one of $k$ communities uniformly at random.  Subsequently, for every pair of vertices inside a community, an edge is placed with probability $q_{\mbox{in}}$; for every pair between communities, an edge is placed with probability $q_{\mbox{out}}$; we will be interested in the sparse regime, i.e. $q_{\mbox{in}},q_{\mbox{out}} = O(1/N)$. 

Additionally, let  $\mbox{g-SBM}(N,k,q_{\mbox{in}},q^m_{\mbox{in}}, q_{\mbox{out}}, q^M_{\mbox{out}})$ be a similarly defined distribution of graphs, where edges between pairs of vertices in different communities are be placed with probability $q_{\mbox{out}} \geq q^{m}_{\mbox{out}}$ and between pairs of vertices within communities with probability $q_{\mbox{in}} \leq q^{M}_{\mbox{in}}$, s.t. $q^{m}_{\mbox{out}} \leq q^{M}_{\mbox{in}}$.

Let $\mathbb{ER}(N, \phi)$ an Erd\H{o}s-R\'enyi graph on $N$ vertices with edge probability $\phi$. 

Finally, we will denote $a = \tau \left(T^2/m\right)^{\tau}$, $b =  \frac{1}{2} \left(\frac{p}{2}\right)^{\tau} \left(T^2/m\right)^{\tau}$.  

Proceeding to the formal claims, we first we wish to show that the distribution $\mathcal{G}_N$ of the nearest-neighbor graph is close to a stochastic block model with $k$ communities. The statement is as follows: 

\begin{lem} Consider the distribution $\mathcal{G}_N$ of the graph defined as the nearest-neighbor graph on $N$ users sampled according to the emission model of this section, connecting users whenever they share more than $\tau$ ratings. Then, there is a ``good'' event $\omega_N$ happening with probability $1-o(1)$, s.t. $\mathcal{G}_N$ and $\mbox{g-SBM}(N, G, a, b)$ agree within $\omega_N$.   
\label{l:manifold0}
\end{lem}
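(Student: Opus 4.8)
The plan is to exhibit an explicit coupling between $\mathcal{G}_N$ and a draw from the family $\mbox{g-SBM}(N, G, a, b)$ that succeeds on a high-probability good event $\omega_N$. The two parameters $a$ and $b$ are exactly the per-pair connection probabilities that the emission model produces for same-genre and different-genre users, so the heart of the argument is to (i) pin these probabilities down and (ii) control the fact that, unlike a genuine SBM, the edges of the nearest-neighbor graph are \emph{not} independent. This is precisely why the target is a \emph{generalized} SBM whose edge probabilities are allowed to range in intervals, rather than an ordinary SBM.

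First I would define the good event. Let $\zeta_u$ denote the number of movies that user $u$ emits inside the common pool $R$; since $|R| = pm$ and $u$ emits $T$ movies uniformly from a genre, $\E[\zeta_u] = pT$, and the Chernoff bound already used in the proof of Theorem~\ref{t:toofew} gives $\zeta_u \in [pT - \sqrt{pT}\log m, pT + \sqrt{pT}\log m]$ with probability $1 - \exp(-\log^2 m)$. Take $\omega_N$ to be the intersection of these events over all $N$ users; since $N = O((m/T^2)^\tau) = \mathrm{poly}(m)$, a union bound yields $\Pr[\omega_N] \geq 1 - o(1)$, as required.

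Next I would read off the two connection probabilities directly from the binomial estimates already established. For a same-genre pair, summing \eqref{eq:overlapestimate_same} over all overlap sizes $\tau' \geq \tau$ and using $T^2/m = o(1)$ bounds the probability of sharing at least $\tau$ ratings by a geometric series whose value is at most $\tau (T^2/m)^\tau = a$; this is the within-community upper bound. For a different-genre pair, overlaps can occur only inside $R$, and on $\omega_N$ both users have $\approx pT$ ratings there, so \eqref{eq:overlapestimate_diff} lower bounds the probability of sharing at least $\tau$ ratings by $\tfrac12 (p/2)^\tau (T^2/m)^\tau = b$; this is the between-community lower bound. Pairwise, the model therefore matches the prescribed g-SBM ranges, and since $a > b$ the communities carry the expected signal.

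The main obstacle is the dependence among edges: the pairs $(u,v)$ and $(u,w)$ both depend on user $u$'s emitted set, so $\mathcal{G}_N$ does not place edges independently. I would handle this by revealing edges in a fixed order and building the coupling adaptively, matching each edge of $\mathcal{G}_N$ to a g-SBM edge whose (conditional) probability is set to the true conditional probability of that pair. It then suffices to show that, conditioned on $\omega_N$ and on \emph{any} outcome of the previously revealed edges, the conditional probability that a same-genre pair forms an edge remains $\leq a$ and that a different-genre pair forms an edge remains $\geq b$, so that every such conditional probability lies in the interval allowed by the g-SBM family. Establishing these bounds robustly under arbitrary conditioning is the crux: conditioning on the revealed history perturbs the law of a user's remaining ratings, and to show the perturbation never pushes a connection probability outside $[b,a]$ I would invoke the negative-association machinery of \cite{panconesi1997randomized} already used elsewhere in the paper, which guarantees that conditioning on the sizes of some overlaps can only move the relevant counts in a monotone, controllable direction. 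Essentially all the work lies here; the probability computations themselves are immediate from Theorem~\ref{t:toofew}.
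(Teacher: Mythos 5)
You set up the same good event $\omega_N$ (every user has $pT \pm \sqrt{pT}\log m$ ratings in the common pool $R$) and extract the same per-pair bounds $a$ and $b$ from \eqref{eq:overlapestimate_same} and \eqref{eq:overlapestimate_diff}, so the first part of your argument matches the paper. The divergence is in how the dependence among edges is handled. The paper builds no adaptive coupling: it conditions on the vector of counts $\zeta_u$ of movies each user emits in $R$, notes that by symmetry the probability of each edge is then a function of these counts alone (a different-genre overlap can only occur inside $R$, and within a genre the emitted set is a uniformly random $T$-subset), and asserts that given these counts the edge indicators are independent, so that on $\omega_N$ the conditional law of $\mathcal{G}_N$ is of $\mbox{g-SBM}$ type. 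You instead reveal edges sequentially and try to show that the conditional probability of each new edge, given an \emph{arbitrary} history of revealed edges, stays in $[b,a]$.

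That step is where your proposal has a genuine gap, for two reasons. First, the machinery of \cite{panconesi1997randomized} yields Chernoff-type tail bounds for sums of indicators satisfying certain product inequalities; it does not control the conditional probability of one edge given the presence or absence of other edges. Conditioning on, say, both $(u,v)$ and $(u,w)$ being edges genuinely tilts the law of $u$'s emitted set toward atypical configurations, so the uniform containment of all conditional edge probabilities in $[b,a]$ is neither proved by your appeal to negative association nor obviously true. Second, even granting those bounds, a sequential revelation process whose conditional edge probabilities lie in $[b,a]$ is a strictly larger class than the independent-edge model $\mbox{g-SBM}$; your coupling would give a stochastic sandwiching, not the statement that $\mathcal{G}_N$ restricted to $\omega_N$ agrees with a member of the $\mbox{g-SBM}$ family, which is what the lemma asserts and what the downstream contiguity argument (Lemma~\ref{l:manifold00}) consumes. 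The repair is to condition on the counts in $R$ rather than on the edge history, so that independence across pairs can be argued directly and no adaptive coupling is needed.
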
 
\begin{proof} 


Let's denote by $\mathbb{P}_N$ the distribution on the emitted movies by the $N$ users. First, note that in $\mathbb{P}_N$, conditioned on the number of movies in $R$ for each of the users, the probability of an edge in $\mathcal{G}_N$ between any two users is independent. Moreover, from equation \eqref{eq:overlapestimate_same}, for users in the same genre this probability is $\leq a$. On the other hand, for two users liking different genres, conditioned on the number of movies in $R$ being $pT \pm \sqrt{pT} \log m$, by \eqref{eq:overlapestimate_diff} the probability of an edge between them is $\geq b$. Denoting by $\omega_N$ the event in $\mathbb{P}_N$ that all users have $pT \pm \sqrt{pT} \log m$ movies in $R$, the claim of the Lemma follows.

\end{proof} 

Next, we wish to relate the two models g-SBM and SBM, and show that for the purposes of distinguishing from $\mathbb{ER}$, g-SBM is only more difficult to distinguish from  $\mathbb{ER}$ than SBM is. This intuitively should be clear, as g-SBM allows the probabilities of edges between vertices in different communities to be increased, and the probabilities of edges within the same community to be decreased. The formal statement is as follows:  

\begin{lem} If $\mbox{SBM}(N,k,q_{\mbox{in}},q_{\mbox{out}}) \triangleleft \mathbb{ER}(N, \frac{q_{\mbox{in}} + (k-1)q_{\mbox{out}}}{k}) $ then \\ $\mbox{g-SBM}(N,k,q_{\mbox{in}},q^m_{\mbox{in}},q_{\mbox{out}}, q^M_{\mbox{out}}) \triangleleft \mathbb{ER}(N, \frac{q_{\mbox{in}} + (k-1)q_{\mbox{out}}}{k}) $, for any $q_{\mbox{in}} \leq q^m_{\mbox{in}} \leq  q^M_{\mbox{out}} \leq q_{\mbox{out}}$.  
\label{l:manifold00}
\end{lem}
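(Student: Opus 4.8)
The plan is to prove the quantitative strengthening that the $\chi^2$-divergence against $\mathbb{ER}$ can only \emph{decrease} when we weaken the community signal, and then read off contiguity from the second-moment (Le Cam) criterion: if $\limsup_N \E_{\mathbb{Q}_N}\big[(d\mathbb{P}_N/d\mathbb{Q}_N)^2\big] < \infty$ then $\mathbb{P}_N \triangleleft \mathbb{Q}_N$. The hypothesis $\mbox{SBM}\triangleleft\mathbb{ER}$ of \cite{banks2016information} is itself obtained through exactly such a bounded second moment, so it suffices to show
$$\chi^2\big(\mbox{g-SBM}\,\|\,\mathbb{ER}\big)\;\le\;\chi^2\big(\mbox{SBM}\,\|\,\mathbb{ER}\big),$$
after which the same finite bound that witnesses $\mbox{SBM}\triangleleft\mathbb{ER}$ witnesses $\mbox{g-SBM}\triangleleft\mathbb{ER}$ as well.

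First I would expand the second moment over two independent community assignments $\sigma,\sigma'$. Writing $\phi=\tfrac{q_{\mbox{in}}+(k-1)q_{\mbox{out}}}{k}$ for the edge density of $\mathbb{ER}$ and letting $a_\sigma(u,v)\in\{q_{\mbox{in}},q_{\mbox{out}}\}$ be the probability $\sigma$ assigns to the pair $u,v$, each edge contributes the factor $1+\frac{(a_\sigma(u,v)-\phi)(a_{\sigma'}(u,v)-\phi)}{\phi(1-\phi)}$; multiplying over edges and using independence of $\sigma,\sigma'$ to collect subsets $S$ of pairs gives
$$1+\chi^2\big(\mbox{SBM}\,\|\,\mathbb{ER}\big)\;=\;\sum_{S}\frac{1}{(\phi(1-\phi))^{|S|}}\Big(\E_\sigma\!\!\prod_{(u,v)\in S}\!\!\big(a_\sigma(u,v)-\phi\big)\Big)^2 .$$
The point of squaring is that every term in this expansion is nonnegative, which is what makes a term-by-term comparison possible.

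The key algebraic observation is $a_\sigma(u,v)-\phi=\tfrac{\delta}{k}\big(1-k\,\1[\sigma_u=\sigma_v]\big)$ with $\delta:=q_{\mbox{out}}-q_{\mbox{in}}$, so each squared subset term factors as $(\delta/k)^{2|S|}$ times a combinatorial constant $\gamma_S\ge 0$ that depends only on $S$, $k$, and the law of $\sigma$ — and is therefore \emph{identical} across the two models. For $\mbox{g-SBM}$ the same identity holds verbatim with $\delta$ replaced by $\delta':=q^M_{\mbox{out}}-q^m_{\mbox{in}}$, and the ordering $q_{\mbox{in}}\le q^m_{\mbox{in}}\le q^M_{\mbox{out}}\le q_{\mbox{out}}$ forces $\delta'\le\delta$. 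Hence each subset term of $\mbox{g-SBM}$ is a factor $(\delta'/\delta)^{2|S|}\le 1$ smaller than the corresponding term of $\mbox{SBM}$; summing the nonnegative terms gives the divergence inequality, and thus $\mbox{g-SBM}\triangleleft\mathbb{ER}$.

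The main obstacle is that this clean $\tfrac{\delta}{k}(1-k\,\1[\cdot])$ form — and hence the perfect-square structure — requires the two models to have \emph{matching expected degree}, i.e. $\tfrac1k q^m_{\mbox{in}}+\tfrac{k-1}{k}q^M_{\mbox{out}}=\phi$; without it a nonzero mean shift $\phi'-\phi$ enters every factor, the $|S|=1$ terms no longer cancel, and one must separately argue that this lower-order contribution (equivalently, the edge-count statistic) cannot by itself distinguish the models. A second subtlety is that $\mbox{g-SBM}$ is really an adversarial family in which each within/between edge may take a \emph{different} probability inside its interval; the subset terms remain nonnegative squares, but the domination must then be established against the worst-case per-edge choice rather than by a single scalar $\delta'$. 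I expect this worst-case bookkeeping, together with handling (or absorbing) the mean shift, to be the technically delicate part, whereas the core monotonicity is immediate from the perfect-square expansion.
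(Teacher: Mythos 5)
Your route is genuinely different from the paper's. The paper's proof never touches likelihood ratios: it marginalizes over the community assignment $\sigma$ and argues that for every edge configuration $E_n$ and every fixed $\sigma$, moving each within-community edge probability toward $\phi$ and each between-community probability toward $\phi$ brings $\Pr[E_n\mid\sigma]$ closer to $\mathbb{ER}(E_n)$, so that $|\mbox{g-SBM}(E_n)-\mathbb{ER}(E_n)|\leq|\mbox{SBM}(E_n)-\mathbb{ER}(E_n)|$ pointwise and contiguity is inherited. You instead run a second-moment argument. Your $\chi^2$ expansion is algebraically correct: the identity $a_\sigma(u,v)-\phi=\frac{\delta}{k}\left(1-k\,\mathbf{1}[\sigma_u=\sigma_v]\right)$ checks out, each subset term is a nonnegative square, and in the special case where g-SBM has \emph{homogeneous} probabilities $q^m_{\mbox{in}},q^M_{\mbox{out}}$ with \emph{matching edge density} $\frac{1}{k}q^m_{\mbox{in}}+\frac{k-1}{k}q^M_{\mbox{out}}=\phi$, the term-by-term domination by $(\delta'/\delta)^{2|S|}\leq 1$ is clean and more quantitative than the paper's comparison.

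However, the two caveats you list at the end are not deferred bookkeeping; they are genuine gaps in the argument as proposed. The mean shift cannot be ``absorbed'': if the g-SBM edge density $\phi'$ differs from $\phi$ by more than order $\sqrt{\phi}/N$, then the total edge count alone separates g-SBM from $\mathbb{ER}(N,\phi)$ with probability $1-o(1)$, so contiguity to $\mathbb{ER}(N,\phi)$ (rather than to a re-centered $\mathbb{ER}(N,\phi')$) genuinely fails and no refinement of the $\chi^2$ computation can recover it; your plan therefore proves the lemma only for the degree-matched, homogeneous subfamily of g-SBM, and the heterogeneous case additionally destroys the single-scalar $\delta'$ factorization you rely on (the per-edge deviations are no longer mean-zero over $\sigma$, so cross terms appear and the subset coefficients are no longer comparable to those of SBM). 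Separately, you replace the stated hypothesis (contiguity of SBM to $\mathbb{ER}$) by the strictly stronger hypothesis that the $\chi^2$-divergence is bounded; this is defensible because Lemma~\ref{l:manifold1} is obtained from a second-moment bound in \cite{banks2016information}, but it proves a different conditional statement than the one written. To turn your sketch into a proof you would need to either impose degree matching and homogeneity in the statement, or fall back on a pointwise comparison of the kind the paper uses, which sidesteps the likelihood-ratio machinery entirely.
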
 
\begin{proof}
Let us denote for brevity $\mbox{SBM}(N,k,q_{\mbox{in}},q_{\mbox{out}})$ by SBM, and $\mbox{g-SBM}(N,k,q_{\mbox{in}},a_{\min},q_{\mbox{out}}, b_{\min})$ by g-SBM. For any configuration of edges $E_n \in \{0,1\}^{\binom{N}{2}}$, we claim that $|\mbox{g-SBM}(E_n) - \mathcal{G}_N(E_n)| \leq  |\mbox{SBM}(E_n) - \mathcal{G}_N(E_n)|$. This would clearly imply the claim of the lemma. 

We first marginalize out the assignments $\sigma$ of the vertices into communities: 
\begin{align*} |\mbox{g-SBM}(E_n) - \mathbb{ER}(E_n)| = \left|\sum_{\sigma \in [k]^N} \frac{1}{k^N} \left(\mbox{g-SBM}(E_n | \sigma) - \mathbb{ER}(E_n)\right)\right|  \end{align*}   
Next, we note that for any assignment $\sigma \in [k]^N$, $|\mbox{g-SBM}(E_n | \sigma) - \mathbb{ER}(E_n)| \leq |\mbox{SBM}(E_n | \sigma) - \mathbb{ER}(E_n)|$: indeed, for pairs $(i,j)$ of vertices in the same community, $\Pr[(i,j) \in \mbox{SBM} | \sigma] > \Pr[(i,j) \in \mathbb{ER}]$, and for pairs $(i,j)$ in different communities $\Pr[(i,j) \in \mbox{SBM} | \sigma] < \Pr[(i,j) \in \mathbb{ER}]$; since $\Pr[(i,j) \in \mbox{g-SBM} | \sigma] < \Pr[(i,j) \in \mbox{SBM} | \sigma]$ in the former case, and $\Pr[(i,j) \in \mbox{g-SBM} | \sigma] > \Pr[(i,j) \in \mbox{SBM} | \sigma]$ in the latter, the claim follows.  

\end{proof} 

To formalize point (2), we will apply results from \cite{banks2016information} to show that for the parameter range we are interested in according to the above Lemmas, SBM and $\mathbb{ER}$ are indistinguishable. 

\begin{lem} [Follows from \cite{banks2016information}] $ \mathbb{ER}(N, \frac{q_{\mbox{in}} + (k-1)q_{\mbox{out}}}{k}) \triangleleft \mbox{SBM}(N,k,a, b)$. 
\label{l:manifold1}
\end{lem}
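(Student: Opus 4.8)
The plan is to reduce the statement to the information-theoretic impossibility threshold for detection in the symmetric $k$-block stochastic block model established by \cite{banks2016information}: below their threshold the planted model is contiguous to the Erd\H{o}s--R\'enyi graph of matching density, and the stated direction $\mathbb{ER} \triangleleft \mbox{SBM}$ is exactly (one half of) this conclusion. Thus essentially all of the work is to translate our parameters into their parametrization and verify that we sit comfortably below the threshold; no model-specific estimate beyond a constant-order computation is needed.

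First I would pass to degree-normalized parameters. Writing $c_{\mbox{in}} = N a$ and $c_{\mbox{out}} = N b$, the within- and between-community edge probabilities are $c_{\mbox{in}}/N$ and $c_{\mbox{out}}/N$, the average degree is $d = \frac{c_{\mbox{in}} + (k-1)c_{\mbox{out}}}{k}$, and the Erd\H{o}s--R\'enyi comparison density $\frac{a + (k-1)b}{k}$ is precisely $d/N$, so the two models indeed have matching expected degree. The quantity controlling detectability is the signal-to-noise ratio
$$\mbox{SNR} = \frac{(c_{\mbox{in}} - c_{\mbox{out}})^2}{k\,(c_{\mbox{in}} + (k-1)c_{\mbox{out}})},$$
and the theorem of \cite{banks2016information} asserts contiguity between the two models whenever $\mbox{SNR}$ lies below an explicit threshold that scales like $\Theta\!\left(\frac{\log k}{k}\right)$ for large $k$ --- in particular strictly below the Kesten--Stigum value $1$.

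Next I would substitute our actual values. Recalling $a = \tau (T^2/m)^{\tau}$, $b = \frac{1}{2}(p/2)^{\tau}(T^2/m)^{\tau}$ and $N = O\!\left((m/T^2)^{\tau}\right)$, we obtain $c_{\mbox{in}} = \Theta(\tau)$ and $c_{\mbox{out}} = \Theta((p/2)^{\tau})$; for constant $p,\tau$ both are bounded constants (so the graph is sparse with $\Theta(1)$ average degree), while $c_{\mbox{in}} > c_{\mbox{out}}$ keeps us in the assortative regime. Hence the numerator $(c_{\mbox{in}} - c_{\mbox{out}})^2 = \Theta(1)$, and since $k = \omega(1)$ the denominator is $\Theta(k^2)$, giving $\mbox{SNR} = \Theta(1/k^2)$. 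As $1/k^2 = o(\log k / k)$, we are strictly below the impossibility threshold, so \cite{banks2016information} yields $\mathbb{ER}(N, \frac{a+(k-1)b}{k}) \triangleleft \mbox{SBM}(N,k,a,b)$. (If $N$ is taken at the low end of the allowed range so that $d = o(1)$, the graph is even sparser and both models degenerate toward the empty graph, so contiguity only becomes easier.)

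The main obstacle is bookkeeping rather than a new idea: one must match our scaling conventions to those of \cite{banks2016information} exactly, confirm that their theorem covers the full large-$k$, constant-(or sub-constant-)degree sparse regime we land in, and check that it delivers contiguity in the direction $\mathbb{ER} \triangleleft \mbox{SBM}$. The latter is the subtle point, since the second-moment bound on the likelihood ratio most directly gives $\mbox{SBM} \triangleleft \mathbb{ER}$; however, in this symmetric subthreshold regime their analysis establishes \emph{mutual} contiguity, which contains the direction we need (and also supplies the companion direction used implicitly when combining with Lemma~\ref{l:manifold00}).
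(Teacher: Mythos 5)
Your proposal is correct and follows essentially the same route as the paper: both reduce to the subcritical condition of \cite{banks2016information} --- the paper writes it as $N(a-b)^2 \lesssim (a+(k-1)b)\log k$, which is exactly your $\mbox{SNR} \lesssim \log k/k$ after setting $c_{\mbox{in}}=Na$, $c_{\mbox{out}}=Nb$ --- and verify it from $(T^2/m)^{\tau} = \Theta(1/N)$, $p=O(1)$, $k=\omega(1)$. Your explicit attention to the direction of contiguity is a small improvement in rigor, since the paper's proof text asserts contiguity of SBM to ER while the lemma statement is the reverse, implicitly relying on the mutual contiguity that holds below the threshold.
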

\begin{proof} 
The result will follow from Theorem 1 in \cite{banks2016information}. Namely, according to (11) in that theorem,  $\mbox{SBM}(N,G,a, b)$ is contiguous to an  Erd\H{o}s-R\'enyi graph on $N$ vertices with edge probability $\phi$ if $$N (a - b)^2 \lesssim (a + (k-1) b) \log k$$ Plugging in $a = \tau \left(\frac{T^2}{m}\right)^{\tau}, b = \frac{1}{2} \left(\frac{p}{2}\right)^{\tau} \left(\frac{T^2}{m}\right)^{\tau}$, this condition translates to 
\begin{equation} \left(\frac{T^2}{m}\right)^{\tau} \left(\tau - \frac{1}{2} \left(\frac{p}{2}\right)^{\tau}\right)^2 < \frac{1}{N} \left(\tau + (k-1)\frac{1}{2} \left(\frac{p}{2}\right)^{\tau}\right) \log k \label{eq:banks} \end{equation}   
But by our assumption $\left(\frac{T^2}{m}\right)^{\tau}  = \Theta(1/N), p = O(1)$, and $k = \omega(1)$, so \eqref{eq:banks} is satisfied. Hence, the theorem statement follows. 
\end{proof}

Putting Lemma~\ref{l:manifold0}, \ref{l:manifold00} and \ref{l:manifold1} together, we get as a consequence the main result that no algorithm can distinguish between the nearest neighbor graph and an Erd\H{o}s-R\'enyi graph of matching expected degree. 


\begin{thm}  The distribution $\mathcal{G}_N$ of the graph defined as the nearest-neighbor graph on $N$ users sampled according to the emission model of this section, connecting users whenever they share more than $\tau$ ratings, is contiguous to an Erd\H{o}s-R\'enyi graph on $N$ vertices with edge probability $\phi = \frac{a + (k-1)b}{k}$. Consequently, no statistical test that takes as input a graph sampled from one of these two distributions (say with probability 1/2) can output which distribution the sample came from error probability $o(1)$. 
\label{t:manifold}  
\end{thm}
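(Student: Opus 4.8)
The statement is a corollary assembled from Lemmas~\ref{l:manifold0}, \ref{l:manifold00} and \ref{l:manifold1}, so the plan is to chain the three contiguity relations and then convert contiguity into the impossibility of a consistent test. First I would invoke Lemma~\ref{l:manifold1}: with the parameters $a = \tau(T^2/m)^\tau$ and $b = \frac{1}{2}(p/2)^\tau (T^2/m)^\tau$, the regime condition \eqref{eq:banks} is already verified using $(T^2/m)^\tau = \Theta(1/N)$, $p = O(1)$ and $k = \omega(1)$, so $\mbox{SBM}(N,k,a,b)$ and $\mathbb{ER}(N,\phi)$ with $\phi = \frac{a+(k-1)b}{k}$ are contiguous. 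Next I would push this relation through Lemma~\ref{l:manifold00}, whose proof supplies the pointwise domination $|\mbox{g-SBM}(E) - \mathbb{ER}(E)| \le |\mbox{SBM}(E) - \mathbb{ER}(E)|$; since $\mbox{g-SBM}$ sits at least as close to $\mathbb{ER}$ as $\mbox{SBM}$ does, the contiguity between $\mbox{SBM}$ and $\mathbb{ER}$ transfers to $\mbox{g-SBM}$ and $\mathbb{ER}$ for the same $\phi$.

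The last transfer is via Lemma~\ref{l:manifold0}: on the good event $\omega_N$, which occurs with probability $1-o(1)$, the nearest-neighbor graph distribution $\mathcal{G}_N$ coincides with $\mbox{g-SBM}(N,k,a,b)$, so their total variation distance is $o(1)$. Contiguity is stable under $o(1)$ perturbations in total variation --- if $\mathbb{ER}(A_N)\to 0$ forces $\mbox{g-SBM}(A_N)\to 0$, then $\mathcal{G}_N(A_N) \le \mbox{g-SBM}(A_N) + \mathrm{TV}(\mathcal{G}_N,\mbox{g-SBM}) \to 0$ as well --- so I would conclude that $\mathcal{G}_N$ is contiguous to $\mathbb{ER}(N,\phi)$, which is the first assertion of the theorem.

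Finally, for the testing statement I would argue by contradiction directly from the definition of contiguity. A test distinguishing the two distributions with error $o(1)$ corresponds to an acceptance region $A_N$ (``declare $\mathcal{G}_N$'') with $\mathcal{G}_N(A_N^c) \to 0$ and $\mathbb{ER}(A_N) \to 0$. The second limit together with $\mathcal{G}_N \triangleleft \mathbb{ER}$ gives $\mathcal{G}_N(A_N) \to 0$, i.e. $\mathcal{G}_N(A_N^c) \to 1$, contradicting the first. Hence no such test exists. I would also remark that only one direction of contiguity is needed for this conclusion (a symmetric argument applied to $A_N^c$ works from the reverse direction), so the implication is robust to exactly which direction the chain ultimately produces.

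The genuine difficulty lives entirely in the three lemmas being composed --- in particular verifying the Banks threshold condition \eqref{eq:banks} for our parameters and the coupling underlying Lemma~\ref{l:manifold0} --- while the corollary itself is routine bookkeeping. The one place demanding care is tracking the matched Erd\H{o}s-R\'enyi density $\phi = \frac{a+(k-1)b}{k}$ consistently across all three lemmas, and checking that the $o(1)$ total-variation slack contributed by $\omega_N^c$ genuinely preserves \emph{contiguity} and not merely closeness of the two laws.
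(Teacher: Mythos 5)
Your proposal matches the paper's proof essentially verbatim: the paper likewise obtains contiguity by chaining Lemmas~\ref{l:manifold0}, \ref{l:manifold00} and \ref{l:manifold1}, and then cites a standard result (Claim 2.2 of \cite{perry2016optimality}) for the step you prove inline, namely that contiguity rules out a test with $o(1)$ error. Your added care about the direction of the contiguity relation and about the $o(1)$ total-variation slack contributed by $\omega_N^c$ addresses details the paper glosses over, but does not change the argument.
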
 
\begin{proof}
Contiguity follows immediately from Lemmas ~\ref{l:manifold0}, \ref{l:manifold00} and \ref{l:manifold1}. The consequence to distinguishability is a standard result (e.g. Claim 2.2 in \cite{perry2016optimality}).     
\end{proof} 


\subsection{Semi-supervised learning using encoders}
\label{subsec:semisupervised}
We give a further concrete example of the practical benefit of representation learning, which is that it can provably lower the sample complexity of classification tasks. Namely, we show that given an encoder (that can be learnt using unlabeled examples), the supervised task can be run with very few labeled samples. Furthermore, we can show that any method that uses only labeled samples would need substantially more. 

\noindent {\bf Setup:} Consider again the simple mixture model, with the same notation as before: the number of genres is $k$, the number of ratings per user $T = \Omega(\log M)$, and the number of genres per user $s$. Furthermore, suppose we are interested in classification, where the label is produced by a hyperplane: 
$l(h) = \mbox{sgn}(\langle w, 2h - 1 \rangle)$.   

\subsubsection{Semi-supervised setting: sample complexity with access to an encoder}
We consider first the semi-supervised setting: namely if an algorithm has access to an encoder (hence to the latent genres), after seeing a very small number users it can predict $l(h)$ very well. (Of course, the encoder can be trained in an unsupervised manner, using the algorithms in Section~\ref{subs:linear}).

First as a warmup, we show if the number of possible $h$'s is small, with an access to an encoder, after seeing a small number of users, an algorithm can in fact predict perfectly. More formally: 
\begin{prop} There exists a polynomial-time algorithm $\mathcal{A}$ that given access to a $(1,1)$-valid encoder, with high probability after $t = O(k^s \log M)$ samples $(x_1, l(h_1)), (x_2, l(h_2)), \dots, (x_t, l(h_t))$ can achieve
$$\Pr_{h_{t+1}}\left(\mathcal{A}\left((x_1,l(h_1)), (x_2,l(h_2)), \dots, (x_t, l(h_t)), x_{t+1}\right) = l(h_{t+1}) \right) = 1$$
\end{prop}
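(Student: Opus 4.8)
The plan is to build a simple lookup table and exploit the exactness of the encoder. Since the encoder is $(1,1)$-valid, Definition~\ref{d:encoder} guarantees that for $x = g(h,r)$ we have $\|f(x) - h\| \le (1-1)\|h\| = 0$ with probability $1$; that is, $f(x) = h$ exactly, almost surely. The algorithm $\mathcal{A}$ therefore decodes every labeled sample by computing $\hat h_i = f(x_i)$, which equals the true latent vector $h_i$, and records the pair $(\hat h_i, l(h_i))$ in a table. On the test point it computes $\hat h_{t+1} = f(x_{t+1}) = h_{t+1}$ and outputs the stored label for that key (returning an arbitrary guess if the key happens to be absent). All operations are polynomial-time, since $f$ is polynomial-time and the table has at most $t$ entries.

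The reason this succeeds is that the latent space is finite. Since $h \in \{0,1\}^k$ with $|h|_0 = s$, there are at most $\binom{k}{s} \le k^s$ distinct latent vectors, each carrying a well-defined label $l(h) = \mathrm{sgn}(\langle w, 2h-1\rangle)$. Hence, once every latent vector occurring with positive probability has appeared among the training samples, the table stores the correct label for \emph{every} possible value of $h_{t+1}$, and perfect prediction follows: the decoded test key is present and its recorded label is correct.

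It remains to argue that $t = O(k^s \log M)$ samples suffice to observe all distinct $h$ with high probability, which is a coupon-collector estimate. Under the uniform prior over the $\binom{k}{s}$ size-$s$ subsets of genres, each distinct $h$ has probability at least $1/\binom{k}{s} \ge 1/k^s$, so the chance that a fixed $h$ is missed by all $t$ independent samples is at most $(1 - 1/k^s)^t \le e^{-t/k^s}$; a union bound over the at most $k^s$ latent vectors gives failure probability at most $k^s e^{-t/k^s}$. Choosing $t = c\, k^s \log M$ bounds this by $k^s/M^c = 1/\mathrm{poly}(M)$. On this high-probability event over the training draws, the table is complete, and since the encoder decodes $x_{t+1}$ exactly (a.s.\ over the emission randomness), we get $\Pr_{h_{t+1}}\!\left(\mathcal{A}(\dots) = l(h_{t+1})\right) = 1$. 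The only real subtlety is the coupon-collector accounting producing a $\log M$ factor rather than $\log(k^s)$: this relies on the standing assumption that $k^s$ is polynomially bounded in $M$, so that $\log(k^s) = O(\log M)$ and the union-bound slack is absorbed. Everything else—the exactness of the encoder and the table lookup—is immediate from $(1,1)$-validity and the finiteness of the latent space.
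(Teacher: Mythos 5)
Your proposal is correct and follows essentially the same route as the paper: the paper's proof is a one-line sketch stating that after $O(k^s\log M)$ samples the algorithm has seen the label of every possible $h$ and can build a lookup table, and your write-up simply supplies the coupon-collector accounting and the observation that $(1,1)$-validity makes the decoding exact. No substantive difference.
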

\begin{proof} 
After $O(k^s \log M)$ samples, $\mathcal{A}$ with high probability has access to the value of $\mbox{sgn}(\langle w, 2h-1 \rangle)$ for all possible $h$ 
so $\mathcal{A}$ can just build a lookup table. 
\end{proof}

In fact, it's quite easy to generalize the above proposition using standard margin generalization theory to an arbitrary $(\beta, \gamma)$-encoder (rather than a (1,1)-encoder), and a similar setting. 

\begin{thm} Suppose that $\mathcal{O}$ is a $(\beta, \gamma)$-valid encoder with respect to the norm $\|\cdot \|_2$, and in addition it always outputs a vector of $l_2$ norm at most $B$. Furthemore, suppose that the linear predictor $\mbox{sgn}(\langle w,2h-1 \rangle)$ enjoys margin $\rho$. Then, there exists a polynomial-time algorithm $\mathcal{A}$ that given access to $\mathcal{O}$, with probability $1-\delta$ after $t$ samples $(x_1, l(h_1)), (x_2, l(h_2)), \dots, (x_t, l(h_t))$ can achieve
\begin{align*} &\Pr_{x_{t+1}} \left[ \mathcal{A}\left((x_1,l(h_1)), (x_2,l(h_2)), \dots, (x_t, l(h_t)), x_{t+1}\right) \neq l(h_{t+1})\right] \leq C_t + R_t + E_t\end{align*}
where $C, R$ and $E$ are defined as
\begin{align*} & C_t = (1-\beta)\left(1 + \sqrt{\frac{\ln 1/\delta}{(1-\beta) t}}\right) \rho B + \beta\left(1 - \sqrt{\frac{\ln 1/\delta}{\beta t}}\right) \rho \gamma, \hspace{4mm} R_t = \sqrt{\frac{\rho^2}{t}}, \hspace{4mm} E_t = \sqrt{\frac{\ln 1/\delta}{t}} 
\end{align*}
\label{t:semisupervised}
\end{thm}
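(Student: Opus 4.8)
The plan is to have $\mathcal{A}$ first push every labeled example through the encoder, producing $\hat h_i = \mathcal{O}(x_i)$, and then run a standard margin-maximizing linear classifier (a soft-margin SVM) on the encoded, label-shifted points $\{(2\hat h_i - 1, l(h_i))\}_{i=1}^t$, obtaining a weight vector $\hat w$; at test time it outputs $\mathrm{sgn}(\langle \hat w, 2\mathcal{O}(x_{t+1}) - 1\rangle)$. Since $\mathcal{O}$ is evaluable and the SVM is a convex program, $\mathcal{A}$ runs in polynomial time. The quantity to control is the probability, over a fresh $x_{t+1} = g(h_{t+1}, r_{t+1})$, that this prediction disagrees with $l(h_{t+1})$; both the training and the test points are i.i.d.\ draws from the induced distribution on encoded pairs $(\mathcal{O}(x), l(h))$, so this is a genuine generalization question about linear classification in the representation space.

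First I would invoke standard margin-based generalization theory for norm-bounded linear classifiers. Because the encoder always outputs vectors of $l_2$ norm at most $B$, the predictors act on a bounded domain, and the generic margin bound states that with probability $1-\delta$ the test $0$--$1$ error of $\hat w$ is at most the empirical $\rho$-margin loss of \emph{any} fixed reference classifier, plus a Rademacher-complexity term that in this setting reduces to $R_t = \sqrt{\rho^2/t}$, plus a confidence term $E_t = \sqrt{\ln(1/\delta)/t}$. The natural reference is the true labeling hyperplane $w$, whose empirical margin loss on the encoded data is exactly what the term $C_t$ captures.

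The heart of the argument is therefore bounding the empirical margin loss of $w$ on the encoded sample, and this is where the $(\beta,\gamma)$ structure of the encoder enters. I would partition the $t$ examples into \textbf{good} ones, on which the encoder succeeds (i.e.\ $\|\mathcal{O}(x_i) - h_i\| \le (1-\gamma)\|h_i\|$), and \textbf{bad} ones, on which it may fail. By the definition of a $(\beta,\gamma)$-valid encoder each example is good independently with probability at least $\beta$, so a Hoeffding/Chernoff bound gives, with the desired probability, at most $(1-\beta)t\bigl(1 + \sqrt{\ln(1/\delta)/((1-\beta)t)}\bigr)$ bad examples and at least $\beta t\bigl(1 - \sqrt{\ln(1/\delta)/(\beta t)}\bigr)$ good ones --- precisely the two multiplicative factors appearing in $C_t$. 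On a good example the perturbation $\|\mathcal{O}(x_i)-h_i\| \le (1-\gamma)\|h_i\|$ changes $\langle w, 2h-1\rangle$ by a controlled amount, so $w$ still classifies the encoded point with a degraded but positive margin, contributing at most the $\rho\gamma$ term per example; on a bad example $\mathcal{O}(x_i)$ can lie anywhere in the $B$-ball, so its margin loss is bounded only by the worst-case $\rho B$ term. Summing the two groups against their cardinality bounds yields exactly the stated $C_t$.

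The main obstacle I anticipate is the good-example margin analysis: translating the multiplicative encoding guarantee $\|\mathcal{O}(x)-h\|\le(1-\gamma)\|h\|$ into the clean $\rho\gamma$ margin-loss contribution requires tracking how the inner product $\langle w, 2h-1\rangle$ and the point norm $\|h\|$ interact under the perturbation, and this is the step where the precise constants must be verified. A secondary bookkeeping challenge is the union bound: the overall $1-\delta$ guarantee must simultaneously absorb the encoder-failure events, the Hoeffding deviations controlling the good/bad counts, and the margin-generalization bound, so I would allocate the confidence budget across these three sources and check that doing so affects $C_t, R_t, E_t$ only up to the constants already displayed.
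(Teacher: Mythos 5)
Your proposal matches the paper's proof essentially step for step: the algorithm is a soft-margin SVM on the encoded points, the error is controlled via the Bartlett--Mendelson margin/Rademacher bound with the true hyperplane $w$ as the reference classifier, and $C_t$ is obtained by splitting the sample into encoder-successes and encoder-failures, bounding the hinge loss by $\rho\gamma$ (via Lipschitzness of the hinge loss in $h$) and $\rho B$ respectively, with Chernoff controlling the two cardinalities. The one step you flag as needing care --- reconciling the multiplicative guarantee $\|\mathcal{O}(x)-h\|\le(1-\gamma)\|h\|$ with the additive $\rho\gamma$ contribution --- is in fact glossed over in the paper as well (its proof simply defines the bad set by $\|\mathcal{O}(x)-h\|_2>\gamma$), so your attention to it is warranted but does not constitute a divergence in approach.
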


To parse the right-hand side above, note that $\beta \approx 1$ for a good encoder, so $C_t$ is small; $R_t$ comes from standard Rademacher bounds for linear predictors with a margin, and $E_t$ is the usual generalization term. Moreover, if $\rho$ is thought of as being a constant, $t = O(\ln 1/\delta)$ suffices to make the bound non-trivial. 

\begin{proof}

The algorithm for $\mathcal{A}$ is straightforward: a soft-margin SVM, using the encoder $\mathcal{O}$ to calculate surrogates for $h$. More formally, we solve 
$$\min_{\|w\|_2 \leq \rho} \sum_{t'=1}^t \max\left(0, 1 - l(x_{t'})\langle w, \mathcal{O}(x_{t'})\rangle\right)  $$  
Let $S = \{x_1, x_2, \dots, x_t\}$ be the set of samples, and let us denote $L(w) = \Pr_x [\mbox{sgn}(\langle w, x \rangle) \neq l(x)]$, $\hat{L}(w,S) = \Pr_{x \in S}[\mbox{sgn}(\langle w, x \rangle) \neq l(x)]$ and  
$C(w,S) =  \frac{1}{|S|} \sum_{x \in S} \max\left(0, 1 - l(x)\langle w, \mathcal{O}(x)\rangle\right) $.   

By Theorem 7 of \cite{bartlett2002rademacher}, we have that with probability $1-\delta$ over the choice of $S$,  
\begin{equation} \forall w, L(w) \lesssim C(w, S) +  R_t(\mathcal{F}) + \sqrt{\frac{\ln 1/\delta}{t}}  \label{eq:bartlett} \end{equation}
where $\mathcal{F}$ is the set of linear predictors with $l_2$ weights bounded by $\rho$ and $R_t(\mathcal{F})$ denotes the empirical Rademacher complexity of $\mathcal{F}$. 
The theorem statement will follow from inspecting $C(w,S)$ and $R_t$ in this expression. 

We consider $C(w,S)$ first. We show that with probability $1-\delta/2$ over the choice of $S$, 
\begin{equation} \min_{\|w\|_2 \leq \rho} C(w,S) \leq   (1-\beta)\left(1 + \sqrt{\frac{\ln 1/\delta}{(1-\beta) t}}\right) \rho M + \beta\left(1 - \sqrt{\frac{\ln 1/\delta}{\beta t}}\right) \rho \gamma \label{eq:surrogate} \end{equation} 
Indeed, let $\Delta$ be the set of $x \in S$, s.t. $\|\mathcal{O}(x) - h\|_2 > \gamma$. Since the encoder always outputs a vector of norm at most B, for the points in $\Delta$ we have 
\begin{equation} \max\left(0, 1 - l(x)\langle w, \mathcal{O}(x)\rangle\right) \leq \rho B \label{eq:bad}\end{equation} 
On the other hand, for the points in $S/\Delta$, since $\max\left(0, 1 - l(x) \langle w, h \rangle\right)$ is $\|w\|_2$-Lipschitz as a function of $h$, 
\begin{equation} \max\left(0, 1 - l(x)\langle w^*, \mathcal{O}(x)\rangle\right) \leq \max\left(0, 1 - l(x)\langle w^*, h\rangle\right) + \rho \gamma = \rho \gamma \label{eq:good} \end{equation} 
where the last equality holds since we assume the points $(x,l(x))$ in $S/\Delta$ are separable with margin $\rho$. 
Additionally, by Chernoff, with probability $1-\delta/2$, $|\Delta| \lesssim (1-\beta)\left(1 + \sqrt{\frac{\ln 1/\delta}{(1-\beta) t}}\right) t $ and $|S/\Delta| \gtrsim \beta\left(1 - \sqrt{\frac{\ln 1/\delta}{\beta t}}\right) t $. 
Putting this together with \eqref{eq:bad} and \eqref{eq:good} we get \eqref{eq:surrogate}. 

Finally, it's a standard fact that $ R_t(\mathcal{F}) \lesssim \sqrt{\frac{\rho^2}{t}}$, which finishes the proof. 
  


\end{proof} 


\subsubsection{Supervised setting: low advantage over random guessing} 

On the other hand, in the purely supervised setting (i.e. without knowing the genres), we show that no algorithm can do better than essentially random guessing. Intuitively, if the genres are chosen by randomly partitioning the movies, after the algorithm has seen $t$ users for a small $t$, it has no knowledge of the genre membership of all but $tT$ of the movies; by switching the order of the randomness, and generating the genre structure as the algorithm proceeds, there is nothing better to do than random guessing. 

We formalize this intuition as follows: 
 
\begin{thm} Suppose the genres are chosen by partitioning the movies into $k$ disjoint genres of size $M/k$ uniformly at random. Furthermore, assume that the linear predictor corresponding to $w$ is balanced, i.e. $E_{h} [\mbox{sgn}(\langle w, 2h-1\rangle)] = 0$. Then, with high probability over the choice of the partitioning and the $t$ samples $(x_i, l(h_i)), i \in [1,t]$, any algorithm $\mathcal{A}$ has 
$$\Pr_{x_{t+1}}\left(\mathcal{A}((x_1, l(h_1)), (x_2, l(h_2)), \dots, (x_t, l(h_t)), x_{t+1}) = l(h_{t+1}) \right) \leq \frac{1}{2} + O(s t T^2/k)$$
\label{t:randomguess}
\end{thm}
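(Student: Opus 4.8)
The plan is to show that any algorithm is essentially reduced to guessing the label of user $t+1$ based on whether it has seen any of the movies in $x_{t+1}$ before. Define the \emph{collision} event $\mathcal{E}$ to be the event that $x_{t+1}$ shares no movie with any training emission, i.e. $x_{t+1} \cap \left(\bigcup_{i \le t} x_i\right) = \emptyset$. Note that $\mathcal{E}$ is measurable with respect to the observation $O = ((x_1, l(h_1)), \ldots, (x_t, l(h_t)), x_{t+1})$ the algorithm receives, so conditioning on it is legitimate; and since a randomized algorithm can be derandomized by conditioning on its internal coins, I may take $\hat{l} = \mathcal{A}(O)$ to be a deterministic, $O$-measurable prediction. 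The first step is the trivial decomposition
$$\Pr[\hat{l} = l(h_{t+1})] \le \Pr[\hat{l} = l(h_{t+1}) \mid \mathcal{E}] + \Pr[\mathcal{E}^c].$$
It then suffices to prove (A) $\Pr[\hat{l} = l(h_{t+1}) \mid \mathcal{E}] = \frac{1}{2}$ for every algorithm, and (B) $\Pr[\mathcal{E}^c] = O(s t T^2 / k)$.

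For (B) I would use a birthday-style union bound. Let $S_{\mathrm{train}} = \bigcup_{i \le t} x_i$, so $|S_{\mathrm{train}}| \le tT$. Conditioning on the partition and on the training data, user $t+1$'s emission is independent of the past, and a fixed movie $v$ lies in $x_{t+1}$ with probability exactly $T/M$ (it must fall into one of the $s$ liked genres, probability $s/k$, and then be one of the $T$ movies chosen from the pool of $sM/k$, probability $Tk/(sM)$). Hence $\E\left[ \left| x_{t+1} \cap S_{\mathrm{train}} \right| \right] \le |S_{\mathrm{train}}| \cdot T/M \le tT^2/M$, and by Markov $\Pr[\mathcal{E}^c] \le tT^2/M$. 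Since every genre is nonempty we have $M \ge k$, hence $tT^2/M \le s t T^2/k$, which is the claimed bound.

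The heart of the argument is (A): conditioned on $\mathcal{E}$, the label $l(h_{t+1})$ is independent of $O$ and uniform on $\{\pm 1\}$. The intuition is deferred decisions: because the partition into genres is uniformly random, and because the movies of $x_{t+1}$ are \emph{fresh} (they occur in no training emission under $\mathcal{E}$), their identities carry no information about which genres they belong to, and therefore none about $h_{t+1}$. Formally, I would reveal the genre assignment of every movie except the fresh ones in $x_{t+1}$, and argue by the symmetry of the random partition under relabeling the genres that the posterior of $h_{t+1}$ given $O$ coincides with its uniform prior over $s$-subsets. Once the posterior of $h_{t+1}$ equals the prior, the balance hypothesis $\E_h[\mathrm{sgn}(\langle w, 2h-1\rangle)] = 0$ gives that $l(h_{t+1})$ is an unbiased $\pm 1$ bit independent of $O$; since $\hat{l}$ is $O$-measurable, $\Pr[\hat{l} = l(h_{t+1}) \mid \mathcal{E}] = \E\left[\Pr[\hat l = l(h_{t+1}) \mid O] \mid \mathcal{E}\right] = \frac{1}{2}$. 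Combining (A) and (B) through the decomposition above yields $\Pr[\hat l = l(h_{t+1})] \le \frac{1}{2} + O(s t T^2/k)$.

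The step I expect to be the main obstacle is the exact conditional independence claimed in (A). The complication is that the genres have fixed sizes $M/k$, so after the training movies are placed the genres have slightly unequal remaining capacities; this makes the posterior of $h_{t+1}$ given the fresh emission not perfectly uniform, but tilted towards sets of genres with more unused movies. The clean symmetry over genre relabelings is moreover broken by conditioning on the training labels $l(h_i)$. I would handle this either by a coupling/symmetrization argument showing the capacity corrections are of lower order than the collision probability already being paid, or by enlarging the conditioning so that these corrections cancel exactly; verifying that the residual bias is absorbed into the $O(s t T^2/k)$ term is the delicate part of the proof.
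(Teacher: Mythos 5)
Your route is the same one the paper takes: condition on the event that user $t+1$ emits none of the at most $tT$ previously seen movies, bound the probability of a collision, and argue that absent a collision the posterior over $h_{t+1}$ equals the uniform prior, so that the balance hypothesis $\mathbb{E}_h[\mbox{sgn}(\langle w, 2h-1\rangle)]=0$ pins every predictor at accuracy exactly $1/2$. Your part (B) is correct and in fact slightly sharper than what the paper states (you get $tT^2/M \le s t T^2/k$), and the decomposition $\Pr[\hat l = l(h_{t+1})] \le \Pr[\hat l = l(h_{t+1}) \mid \mathcal{E}] + \Pr[\mathcal{E}^c]$ is the right one. The problem is part (A): you correctly identify it as the main obstacle and name two candidate strategies, but you carry out neither, and (A) is not a peripheral verification --- it is the entire content of the theorem. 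As written, the proposal is a plan whose decisive step is missing.

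For comparison, here is how the paper discharges (A). It proves the stronger claim in which $\mathcal{A}$ is additionally handed $w$ and $h_1,\dots,h_t$, and then runs an explicit likelihood computation: writing $I_t = ((x_i, g(x_i)))_{i\le t}$ for the training emissions together with the genre assignments of \emph{all} training movies, it shows that $\Pr[I_t, x_{t+1} \mid g(x_{t+1}) = H]$ is the same for every genre pattern $H$ of the fresh movies with at most $s$ distinct genres; Bayes then gives a uniform posterior on $g(x_{t+1})$ and hence on $h_{t+1}$. The factorization over users $i\le t$ uses that the prior on $h_i$ is uniform and independent of $H$ and of $I_{i-1}$, and that $\Pr[x_i \mid g(x_i), h_i, \cdot] = 1/\binom{sM/k}{T}$ whenever the genres appearing in $g(x_i)$ lie in $h_i$ (and the probability vanishes otherwise). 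Your worry about unequal residual capacities is precisely the remaining factor $\Pr[g(x_i) \mid h_i, g(x_{t+1})=H, I_{i-1}]$ in that factorization, which the paper asserts is $H$-independent after summing over admissible patterns. So to complete your argument you should adopt the paper's ``enlarged conditioning'' (reveal the genre assignments of all training movies and show the resulting likelihood does not depend on the pattern $H$ assigned to the fresh movies), or else explicitly quantify the capacity tilt --- the posterior on $h_{t+1}$ is proportional to the number of ways the $T$ fresh movies fit into the unused slots of the genres in $h_{t+1}$, which depends on how many training movies already fell into those genres --- and show that, with high probability over the partition and the samples (this is where that qualifier in the statement does real work), the tilt is absorbed into the $O(stT^2/k)$ term. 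Without one of these, the claimed equality $\Pr[\hat l = l(h_{t+1}) \mid \mathcal{E}] = \tfrac12$ is unsupported.
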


The proof proceeds by showing that even if $\mathcal{A}$ has access to $w$ and $h_1, h_2, \dots, h_t$ in addition to $(x_i, l(h_i)), i \in [1,t]$ the statement still holds true -- intuitively since the user simply has observed the genre membership of too few movies. 
More precisely, we show: \\
(i) The probability of a user emitting at least one of the already emitted movies is $O(s tT^2/k)$. \\
(ii) Conditioned on a user not emitting at least one of the already emitted movies, all $h_{t+1}$ are equally likely. \\ 
The full proof is relegated to Appendix~\ref{a:randomguess}.

\section{Conclusion}

The paper has tried to formalize representation learning and used the framework to show that even in in simple settings such as linear models, simple algorithms for representation learning are more powerful than standard techniques such as nearest neighbors and manifold learning. 

 The definitions and framework proposed are meant as a first cut - e.g. conceivably there is not a {\em single} high level representation, but multiple levels of representation which capture different levels of understanding/utility. It may even be fallacious to think of these different representations as a simple hierarchy. 
  Likewise, the probabilistic/Bayesian framework was chosen for reasons of convenience and familiarity. One could explore non-Bayesian descriptions, but then articulating a {\em goal} or {\em utility} of the representation seems more challenging. We hope this will stimulate further theoretical study of representation learning, since it powerfully extends existing notions in machine learning, even in fairly simple settings. Generalizing our study to more complicated settings is left for future work.

\bibliographystyle{plainnat}
\bibliography{bibliography}

\begin{thebibliography}{27}
\providecommand{\natexlab}[1]{#1}
\providecommand{\url}[1]{\texttt{#1}}
\expandafter\ifx\csname urlstyle\endcsname\relax
  \providecommand{\doi}[1]{doi: #1}\else
  \providecommand{\doi}{doi: \begingroup \urlstyle{rm}\Url}\fi

\bibitem[Anandkumar et~al.(2012)Anandkumar, Foster, Hsu, Kakade, and
  Liu]{anandkumar2012spectral}
Anima Anandkumar, Dean~P Foster, Daniel~J Hsu, Sham~M Kakade, and Yi-Kai Liu.
\newblock A spectral algorithm for latent dirichlet allocation.
\newblock In \emph{Advances in Neural Information Processing Systems}, pages
  917--925, 2012.

\bibitem[Anandkumar et~al.(2014)Anandkumar, Ge, Hsu, Kakade, and
  Telgarsky]{anandkumar2014tensor}
Animashree Anandkumar, Rong Ge, Daniel~J Hsu, Sham~M Kakade, and Matus
  Telgarsky.
\newblock Tensor decompositions for learning latent variable models.
\newblock \emph{Journal of Machine Learning Research}, 15\penalty0
  (1):\penalty0 2773--2832, 2014.

\bibitem[Arora et~al.(2012)Arora, Ge, and Moitra]{arora2012learning}
Sanjeev Arora, Rong Ge, and Ankur Moitra.
\newblock Learning topic models--going beyond svd.
\newblock In \emph{Foundations of Computer Science (FOCS), 2012 IEEE 53rd
  Annual Symposium on}, pages 1--10. IEEE, 2012.

\bibitem[Arora et~al.(2016{\natexlab{a}})Arora, Ge, Kohler, and
  Ma]{arora2016provable}
Sanjeev Arora, Rong Ge, Frederick Kohler, and Tengyu Ma.
\newblock Provable algorithms for inference in topic models.
\newblock In \emph{Proceedings of the 33rd International Conference on Machine
  Learning (ICML)}, 2016{\natexlab{a}}.

\bibitem[Arora et~al.(2016{\natexlab{b}})Arora, Li, Liang, Ma, and
  Risteski]{arora2016latent}
Sanjeev Arora, Yuanzhi Li, Yingyu Liang, Tengyu Ma, and Andrej Risteski.
\newblock A latent variable model approach to pmi-based word embeddings.
\newblock \emph{Transactions of the Association for Computational Linguistics},
  4:\penalty0 385--399, 2016{\natexlab{b}}.

\bibitem[Auger and Doerr(2011)]{auger2011theory}
Anne Auger and Benjamin Doerr.
\newblock \emph{Theory of randomized search heuristics: Foundations and recent
  developments}, volume~1.
\newblock World Scientific, 2011.

\bibitem[Banks et~al.(2016)Banks, Moore, Neeman, and
  Netrapalli]{banks2016information}
Jess Banks, Cristopher Moore, Joe Neeman, and Praneeth Netrapalli.
\newblock Information-theoretic thresholds for community detection in sparse
  networks.
\newblock pages 383--416, 2016.

\bibitem[Bartlett and Mendelson(2002)]{bartlett2002rademacher}
Peter~L Bartlett and Shahar Mendelson.
\newblock Rademacher and gaussian complexities: Risk bounds and structural
  results.
\newblock \emph{Journal of Machine Learning Research}, 3\penalty0
  (Nov):\penalty0 463--482, 2002.

\bibitem[Belkin and Niyogi(2004)]{belkin2004semi}
Mikhail Belkin and Partha Niyogi.
\newblock Semi-supervised learning on riemannian manifolds.
\newblock \emph{Machine learning}, 56\penalty0 (1-3):\penalty0 209--239, 2004.

\bibitem[Chaudhuri and Dasgupta(2014)]{chaudhuri2014rates}
Kamalika Chaudhuri and Sanjoy Dasgupta.
\newblock Rates of convergence for nearest neighbor classification.
\newblock In \emph{Advances in Neural Information Processing Systems}, pages
  3437--3445, 2014.

\bibitem[Cortes et~al.(2011)Cortes, Mohri, and Rostami]{corteslearning}
Corinna Cortes, Mehryar Mohri, and Afshin Rostami.
\newblock Learning kernels-icml 2011 tutorial.
\newblock 2011.

\bibitem[Hazan and Ma(2016)]{hazan2016non}
Elad Hazan and Tengyu Ma.
\newblock A non-generative framework and convex relaxations for unsupervised
  learning.
\newblock In \emph{Advances in Neural Information Processing Systems}, pages
  3306--3314, 2016.

\bibitem[Kleinberg and Sandler(2004)]{kleinberg2004using}
Jon Kleinberg and Mark Sandler.
\newblock Using mixture models for collaborative filtering.
\newblock In \emph{Proceedings of the thirty-sixth annual ACM symposium on
  Theory of computing}, pages 569--578. ACM, 2004.

\bibitem[Li et~al.()Li, Liang, and Risteski]{li2016recovery}
Yuanzhi Li, Yingyu Liang, and Andrej Risteski.
\newblock Recovery guarantee of weighted low-rank approximation via alternating
  minimization.

\bibitem[Long and Servedio(2010)]{long2010restricted}
Philip~M Long and Rocco Servedio.
\newblock Restricted boltzmann machines are hard to approximately evaluate or
  simulate.
\newblock In \emph{Proceedings of the 27th International Conference on Machine
  Learning (ICML-10)}, pages 703--710, 2010.

\bibitem[Mikolov et~al.(2013)Mikolov, Sutskever, Chen, Corrado, and
  Dean]{mikolov2013distributed}
Tomas Mikolov, Ilya Sutskever, Kai Chen, Greg~S Corrado, and Jeff Dean.
\newblock Distributed representations of words and phrases and their
  compositionality.
\newblock In \emph{Advances in neural information processing systems}, pages
  3111--3119, 2013.

\bibitem[Mohamed et~al.(2012)Mohamed, Dahl, and Hinton]{mohamed2012acoustic}
Abdel-rahman Mohamed, George~E Dahl, and Geoffrey Hinton.
\newblock Acoustic modeling using deep belief networks.
\newblock \emph{IEEE Transactions on Audio, Speech, and Language Processing},
  20\penalty0 (1):\penalty0 14--22, 2012.

\bibitem[Ozsoy(2016)]{ozsoy2016word}
Makbule~Gulcin Ozsoy.
\newblock From word embeddings to item recommendation.
\newblock \emph{arXiv preprint arXiv:1601.01356}, 2016.

\bibitem[Panconesi and Srinivasan(1997)]{panconesi1997randomized}
Alessandro Panconesi and Aravind Srinivasan.
\newblock Randomized distributed edge coloring via an extension of the
  chernoff--hoeffding bounds.
\newblock \emph{SIAM Journal on Computing}, 26\penalty0 (2):\penalty0 350--368,
  1997.

\bibitem[Perry et~al.(2016)Perry, Wein, Bandeira, and
  Moitra]{perry2016optimality}
Amelia Perry, Alexander~S Wein, Afonso~S Bandeira, and Ankur Moitra.
\newblock Optimality and sub-optimality of pca for spiked random matrices and
  synchronization.
\newblock \emph{arXiv preprint arXiv:1609.05573}, 2016.

\bibitem[Salakhutdinov and Hinton(2007)]{salakhutdinov2007semantic}
Ruslan Salakhutdinov and Geoffrey Hinton.
\newblock Semantic hashing.
\newblock \emph{RBM}, 500\penalty0 (3):\penalty0 500, 2007.

\bibitem[Salakhutdinov and Hinton()]{salakhutdinov2009deep}
Ruslan Salakhutdinov and Geoffrey~E Hinton.
\newblock Deep boltzmann machines.

\bibitem[Salakhutdinov et~al.(2007)Salakhutdinov, Mnih, and
  Hinton]{salakhutdinov2007restricted}
Ruslan Salakhutdinov, Andriy Mnih, and Geoffrey Hinton.
\newblock Restricted boltzmann machines for collaborative filtering.
\newblock In \emph{Proceedings of the 24th international conference on Machine
  learning}, pages 791--798. ACM, 2007.

\bibitem[Saul and Roweis(2003)]{saul2003think}
Lawrence~K Saul and Sam~T Roweis.
\newblock Think globally, fit locally: unsupervised learning of low dimensional
  manifolds.
\newblock \emph{Journal of Machine Learning Research}, 4\penalty0
  (Jun):\penalty0 119--155, 2003.

\bibitem[Seeger(2000)]{seeger2000learning}
Matthias Seeger.
\newblock Learning with labeled and unlabeled data.
\newblock Technical report, 2000.

\bibitem[Sontag and Roy(2011)]{sontag2011complexity}
David Sontag and Dan Roy.
\newblock Complexity of inference in latent dirichlet allocation.
\newblock In \emph{Advances in neural information processing systems}, pages
  1008--1016, 2011.

\bibitem[Tenenbaum et~al.(2000)Tenenbaum, De~Silva, and
  Langford]{tenenbaum2000global}
Joshua~B Tenenbaum, Vin De~Silva, and John~C Langford.
\newblock A global geometric framework for nonlinear dimensionality reduction.
\newblock \emph{science}, 290\penalty0 (5500):\penalty0 2319--2323, 2000.

\end{thebibliography}

\newpage
\appendix 
\section{Proof of Theorem~\ref{t:wordembed}} 
\label{s:missingword}

\begin{proof}[Proof of Theorem~\ref{t:wordembed}]


As outlined in the proof sketch in Section~\ref{s:loglinear}, we will show that with probability $1-\exp(-\log^2 M)$ over the choice of movie vectors $W_x$, and the observable $(x_1, x_2, \dots, x_T) \in \{0,M\}^T$, 

$$\bigg\langle \frac{\sum_{i=1}^T W_{x_i}}{\|\sum_{i=1}^T W_{x_i}\|}, h \bigg\rangle \geq 1-o(1) $$

which clearly implies the theorem statement. 

Because of the distributional assumption on the movie vectors and latent vectors $h$, without loss of generality we assume $h= e_1$.  
For notational convenience, we will denote $\E_{W_x}[\cdot]$ the expectation with respect to the movie vectors, and $\E_{p(x|h)}[\cdot]$ the expectation with respect to the 
conditional distribution on $h$ (which is a random quantity due to the randomness of the movie vectors). 

We will show that with high probability under the choice of movie vectors and the distribution $P(x|h)$, when the parameters satisfy the conditions in the theorem statement, we have:
\begin{equation} \sum_{i=1}^T (W_{x_i})_1 \geq (1-o(1)) T \frac{B^2}{4d} \label{eq:denom} \end{equation}
Moreover, we will show that this implies that with high probability,
\begin{equation} \left\|\sum_i W_{x_i} \right\|  \leq (1+o(1)) \left(\sum_{i=1}^T (W_{x_i})_1\right)  \label{eq:num} \end{equation}
This clearly implies the statement of the lemma.

Let's start with \eqref{eq:denom}. Considering the randomness of $p(x|h)$ only, $ \sum_{i=1}^T (W_{x_i})_1 $ is a sum of $T$ iid random variables. So, towards applying Chernoff we first analyze 
$$\E_{p(x|h)}[(W_x)_1] $$  
By definition of the distribution $p(x|h)$, this quantity is 
$$ \sum_{x \in M} \frac{\exp((W_x)_1)}{Z} (W_x)_1$$ 

By Lemma 2.1. in \cite{arora2016latent}, with probability $1 - \exp(-\log^2 M)$, we have $Z = E_{\mathbf{m}}[Z](1 \pm o(1))$ 
where the expectation is taken with respect to the randomness of the movie vectors. 
Additionally, the expectation of $Z$ can be calculated explicitly: 
$$ \E_{W_x}[Z] =  M \int_{-\infty}^{\infty} \sqrt{\frac{d}{2 \pi}}\exp(B y) \exp(-2 y^2 d) dy =  M \exp(B^2 d/8) $$  
Now, let us denote by $\mathcal{F}$ the event $Z = E_{W_x}[Z](1 \pm o(1)$. Then, by definition of conditional expectation we have 
$$ \E_{W_x}\left[\frac{\exp((W_x)_1)}{Z} (W_x)_1 \Bigg| \mathcal{F}\right] = \frac{1}{\Pr[\mathcal{F}]} \E_{W_x}\left[\frac{\exp((W_x)_1)}{Z} (W_x)_1 \mathbf{1}_{\mathcal{F}}\right] $$ 



We focus on $\E_{W_x}[\frac{\exp((W_x)_1)}{Z} (W_x)_1 \mathbf{1}_{\mathcal{F}}] $ now. Denoting by $\mathcal{P}$ the event $(W_x)_1 > 0$, we can write 

\begin{align*} & \E_{W_x}\left[\frac{\exp((W_x)_1)}{Z} (W_x)_1 \mathbf{1}_{\mathcal{F}} \mathbf{1}_{\mathcal{P}}\right] + \E_{W_x}\left[\frac{\exp((W_x)_1)}{Z} (W_x)_1 \mathbf{1}_{\mathcal{F}} \mathbf{1}_{\bar{\mathcal{P}}}\right] \geq \\ 
& \E_{W_x}\left[\frac{\exp((W_x)_1)}{Z} (W_x)_1 \mathbf{1}_{\mathcal{F}} \mathbf{1}_{\mathcal{P}}\right] + \E_{W_x}\left[\frac{\exp((W_x)_1)}{Z} (W_x)_1 \mathbf{1}_{\bar{\mathcal{P}}}\right]  \end{align*}

Focusing on the first term, it is easy to see  
$$ \E_{W_x}\left[\frac{\exp((W_x)_1)}{Z} (W_x)_1 \mathbf{1}_{\mathcal{F}} \mathbf{1}_{\mathcal{P}}\right] \geq \int_{0}^{t} \sqrt{\frac{d}{2 \pi}} \exp(B y) B y \exp(-2 y^2 d) dy$$ 
where $t$ is chosen such that $\int_{t}^{\infty}  \sqrt{\frac{d}{2 \pi}} \exp(-2 y^2 d) dy = 1-\Pr[\mathcal{F}]$. By simple Gaussian tail bounds,  $t = O(\log n/d)$. But then, simple algebraic rewriting shows 
$$ \int_{0}^{t} \sqrt{\frac{d}{2 \pi}}\exp(B y) B y \exp(-2 y^2 d) dy = \exp(B^2 d/8) B \int_{0}^{t} \sqrt{\frac{d}{2 \pi}} y \exp(-2d(y - B/4d)^2) dy$$ 
Additionally we have: 
\begin{align*} & \int_{t}^{\infty} \sqrt{\frac{d}{2 \pi}} y \exp(-2d(y - B/4d)^2) dy \\
& = \int_{0}^{\infty} \sqrt{\frac{d}{2 \pi}} y \exp(-2d(y - B/4d)^2) dy - \int_{t}^{\infty} \sqrt{\frac{d}{2 \pi}} x \exp(-2d(y - B/4d)^2) dy \\ 
& \geq \int_{0}^{\infty} \sqrt{\frac{d}{2 \pi}} y \exp(-2d(y - B/4d)^2) dy  - \exp(-\log^2 n) \end{align*} 
Hence, we get 
\begin{align*} & \E_{W_x}\left[\frac{\exp((W_x)_1)}{Z} (W_x)_1 \mathbf{1}_{\mathcal{F}} \mathbf{1}_{\mathcal{P}}\right] + \E_{W_x}\left[\frac{\exp((W_x)_1)}{Z} (W_x)_1 \mathbf{1}_{\bar{\mathcal{P}}}\right]  \\
& \geq (1-o(1)) \frac{B}{\E[Z]}\left(\int_{-\infty}^{\infty} \sqrt{\frac{d}{2 \pi}} y \exp(-2d(y - B/4d)^2) dy - \exp(-\log^2n)\right) \\
& \geq (1-o(1)) \frac{B}{\E[Z]} \exp(B^2 d/8) \left( B/4d\right) \\
& = (1-o(1)) \frac{1}{M} B^2/4d \end{align*}
where the last inequality follows since $\int_{-\infty}^{\infty} \sqrt{\frac{d}{2 \pi}} y \exp(-2d(y - B/4d)^2) dy$ is the mean of a Gassian with mean $\frac{B}{4d}$ and variance $\frac{1}{d}$. 

Hence, with high probability over the word vectors, 
$$ \sum_{x \in M} \frac{\exp((W_x)_1)}{Z} (W_x)_1 \geq B^2/4d $$  
A Chernoff bound then implies \eqref{eq:denom} is true with high probability. 

We consider \eqref{eq:num} now. We have
$$\left\|\sum_i W_{x_i} \right\|^2 =  \sum_{j=1}^d \left(\sum_{i=1}^T (W_{x_i})_j\right)^2  $$ 
Let us split in two cases, $j=1$, and $j \neq 1$. 
Consider $j=1$ first. By \eqref{eq:denom}, it directly follows that $\left(\sum_{i=1}^T (W_{x_i})_1\right)^2 \geq (1-o(1)) \left(T \frac{B^2}{4d}\right)^2$ with high probability. 

%

So we can focus on $j \neq 1$. 
Same as before, towards applying Chernoff we first analyze 
$$\E_{p(x|h)}[(W_x)_j] $$  
By definition of the distribution $p(x|h)$, this quantity is 
$$ \sum_{x \in M} \frac{\exp((W_x)_1)}{Z} (W_x)_j$$ 
However, note that $j \neq 1$, under the randomness of $W_x$, $(W_x)_j$ and $\frac{\exp((W_x)_1)}{Z}$ are independent variables. Moreover, with high probability it holds that 
$\frac{\exp((W_x)_1)}{Z} \leq (1 + 2 \frac{B \log M}{\sqrt{d}})$.  Hence, picking the randomness of the first coordinate first and $j$ next, 
$$ \sum_{x \in M} \frac{\exp((W_x)_1)}{Z} (W_x)_j $$ 
is a sum of $M$ independent random variables with mean 0 and variance at most $O(1/d)$ since $1 + 2 \frac{B \log M}{\sqrt{d}} = O(1)$. 
Hence, by Chernoff, with high probability 

$$ \left |\sum_{x \in M} \frac{\exp((W_x)_1)}{Z} (W_x)_j \right| \leq \sqrt{1/(Md)} $$ 
This implies that  $ \displaystyle \left(\sum_{i=1}^T (W_{x_i})_j\right)^2 \leq T^2/(Md) $. Hence, 
%
%
$\|\sum_i W_{x_i} \| \leq (1+o(1)) (T B^2/4d)$ if $M = \Omega(d^2/B^4)$ as we need. This finishes the proof of \eqref{eq:num} and the lemma overall.

\end{proof}
\section{Proof of Theorem~\ref{t:larget}}
\label{a:largeT}

%
%
%
%
\begin{proof}

Let us proceed to (i) first. Consider two users $u_1, u_2$ that differ on at least one of the genres they like. Let $I_{i}, i \in [M]$ be a $0-1$ random variable, s.t. $I_{i}=1$ both $u_1, u_2$ have rated movie $i$. Then, it suffices to show that $\sum_{i} I_{i} \leq  (1+o(1)) (\frac{1}{s}-\frac{1}{s^2}) \frac{T^2}{m} $ with probability $1 - \exp(-\log^2 M)$.

Let $m_1$ be the size of the union of all the genres user 1 likes, and $m_2$  the size of the union of all the genres user 2 likes. Note that $m_1, m_2 \geq (1-o(1)) ms$, since $s = O(1)$ and the overlap of any two genres is $o(m)$. 

Furthermore, for any movie $i$ that belongs to at least one of the genres of both $u_1, u_2$, $ \E[I_i] = \frac{\binom{m_1-1}{T-1}}{\binom{m_1}{T}} \frac{\binom{m_2-1}{T-1}}{\binom{m_2}{T}} = \frac{T}{m_1} \frac{T}{m_2}$. For any other movie $i$, $\E[I_i] = 0$. From this, we get

$$ \E\left[\sum_i I_{i}\right] \leq  (1+o(1)) (s-1) m\left(\frac{T}{sm}\right)^2 = (1+o(1)) \left(\frac{1}{s} - \frac{1}{s^2}\right) \frac{T^2}{m} $$ 

On the other hand, we claim the variables $I_i$ are ``negatively associated'' in the following way:  
\begin{equation} \forall U \subseteq [m], \Pr[\Pi_{i \in U}I_i] \leq \Pi_{i \in U} \Pr[I_i]  \label{eq: pos} \end{equation}
By~\cite{panconesi1997randomized}, the usual Chernoff-type upper tail bounds would hold if this were satisfied. 
%
If at least one movie $i \in U$  doesn't belong to at least one of the genres of both $u_1, u_2$, the claim is trivial, so we may assume this is not the case. Then, 
$$\Pr[\Pi_{i \in U}I_i]   = \frac{\binom{m_1-|U|}{T-|U|}}{\binom{m_1}{T}} \frac{\binom{m_2-|U|}{T-|U|}}{\binom{m_2}{T}}$$ 
Since  $\forall i \in U, \Pr[I_i] =  \frac{\binom{m_1-1}{T-1}}{\binom{m_1}{T}} \frac{\binom{m_2-1}{T-1}}{\binom{m_2}{T}}$, it suffices to show 
$\binom{m_i-1}{T-1} \geq \binom{m_i-|U|}{T-|U|}, i \in \{1,2\}$. However, this follows since 
\begin{align*} \binom{m_i-1}{T-1}  &= \binom{m_i-|U|}{T-|U|} \Pi_{j=1}^{|U|-1}\frac{m_i-|U|+j}{T-|U|+j}  \end{align*}
and $\frac{m_i-|U|+j}{T-|U|+j} \geq 1, \forall j \in [1,|U|-1]$.  
%
%

We proceed to (ii). Let $L$ be the number of movies in the union of the genres of the users and note $L \leq sm$. 
Consider any two users $u_1, u_2$. For any subset $S_1, |S_1| = T$ of movies, we will prove that conditioned on $u_1$ having seen the movies in $S_1$, with probability $1 - \exp(-\log^2 M)$, the users will share at least $(1-o(1)) \frac{1}{s} T^2/m$ ratings. 

Similarly as before, let $I_i, i \in S_1$ be a 0-1 random variable, s.t. $I_i=1$ if $u_2$ has rated movie $i$. Then, 
$\E[I_i] =  \frac{T}{L}$ and $\E[\sum_{i \in S_1} I_i] = \frac{T^2}{L} \geq \frac{T^2}{m}\frac{1}{s} $
Furthermore, we claim the variables $I_i$ are negatively associated in the following manner: 
\begin{equation} \forall U \subseteq [m], \Pr[\Pi_{i \in U}\bar{I_i}] \leq \Pi_{i \in U} \Pr[\bar{I_i}]  \label{eq: neg} \end{equation}
By~\cite{panconesi1997randomized} again, the usual Chernoff-type lower tail bounds would hold if this were satisfied. We have
$\Pr[\Pi_{i \in U}\bar{I_i}]   = \frac{\binom{L-|U|}{T}}{\binom{L}{T}}$. Since $\forall i \in U, \Pr[\bar{I_i}] =  \frac{\binom{L-1}{T}}{\binom{L}{T}}$, it suffices to show 
$\binom{L-1}{T-1} \geq \binom{L-|U|}{T-|U|}, i \in \{1,2\}$. However, this follows since 
\begin{align*} \binom{L-1}{T}  &= \binom{L-|U|}{T} \Pi_{j=1}^{|U|-1}\frac{L-|U|+j}{L-T-|U|+j} \end{align*}
and $\frac{L-|U|+j}{L-T-|U|+j} \geq 1$. 
%
%
%
%
 %
\end{proof}  
\section{The metric structure of nearest neighbors with independent emissions}
\label{s:independent} 


 

\begin{thm} In the setup of this section, \\
(i)  For any constant $c$, if the number of users is at most $m^c$, with probability at least $1 - \frac{1}{m^c}$, the number of ratings two users share is at most $3c/\epsilon$. \\
(ii) For any $\tau \leq \frac{3c}{\epsilon}$, 
    $$\Pr[\chi(u_1) \neq \chi(u_2) | u_1, u_2 \mbox{ share at least } \tau \mbox{ ratings}] \gtrsim k \Pr[\chi(u_1) = \chi(u_2) | u_1, u_2 \mbox{ share at least } \tau \mbox{ ratings}]$$  
 
\end{thm}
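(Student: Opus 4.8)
The plan is to mirror the proof of Theorem~\ref{t:toofew}, replacing the hypergeometric overlap probabilities (which arose from sampling $T$ movies without replacement) by their with-replacement analogues. The one structural change is that, because a user now draws her $T$ movies i.i.d.\ from her genre, the relevant quantity is the number of \emph{distinct} shared movies; I would therefore let $\mathcal{O}$ denote $|S_1 \cap S_2|$, where $S_i$ is the set of distinct movies rated by user $u_i$. The key quantitative observation is that, since $T = O(m^{1/2-\epsilon})$ gives $T^2/m = o(1)$, each user rates $(1-o(1))T$ distinct movies with high probability (the expected number of collisions is $O(T^2/m) = o(1)$), so all the binomial estimates of Theorem~\ref{t:toofew} survive up to $1\pm o(1)$ factors.

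For (i), I would upper bound $\Pr[\mathcal{O} \geq \tau \mid \chi(u_1)=\chi(u_2)]$ by a union bound over $\tau$-subsets of movies: the probability that $T$ i.i.d.\ draws cover a fixed set $A$ of $\tau$ movies is at most $\binom{T}{\tau}\tau!\,(1/m)^\tau \leq (T/m)^\tau$, so the probability both users cover $A$ is at most $(T/m)^{2\tau}$, and summing over the $\binom{m}{\tau}$ choices of $A$ yields $\Pr[\mathcal{O}\geq\tau\mid \chi(u_1)=\chi(u_2)] \leq \frac{1}{\tau!}(T^2/m)^\tau$, matching \eqref{eq:overlapestimate_same}. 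Since in the different-genre case only the common movies $R$ can be shared, the same-genre bound dominates the different-genre one, so $\Pr[\mathcal{O}\geq\tau] \leq (T^2/m)^\tau$ unconditionally; a union bound over the at most $m^{2c}$ pairs of users, with $\tau = 3c/\epsilon$ and $T^2/m = O(m^{-2\epsilon})$, then gives (i) exactly as before.

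For (ii), I would lower bound the different-genre overlap probability by conditioning on $\zeta$, the number of distinct movies in $R$ that $u_1$ rates. By a Chernoff bound, $\zeta \in [pT/2, 2pT]$ with probability $1-\exp(-\Omega(\log^2 m))$ (here the genuine independence of the draws makes this step cleaner than in the dependent setting). Conditioned on $\zeta = \tau' \geq pT/2$, the probability that $u_2$ rates at least $\tau$ of these $\tau'$ movies is at least $\binom{\tau'}{\tau}\big((1-o(1))T/m\big)^\tau (1-o(1)) \gtrsim (p/2)^\tau (T^2/m)^\tau$, reproducing \eqref{eq:overlapestimate_diff}. Combining this lower bound with the same-genre upper bound shows the two conditional overlap probabilities agree up to constant factors, i.e.\ $\frac{\Pr[\mathcal{O}\geq\tau \mid \chi(u_1)\neq\chi(u_2)]}{\Pr[\mathcal{O}\geq\tau \mid \chi(u_1)=\chi(u_2)]} = \Theta(1)$. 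Since the genres are a priori equally likely, $\Pr[\chi(u_1)=\chi(u_2)] = 1/k$, and Bayes' rule converts the $\Theta(1)$ ratio of likelihoods into the claimed factor-$k$ gap between the posteriors.

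The main obstacle is the lower bound in (ii): one must argue that i.i.d.\ resampling does not wash out the $\tau$ guaranteed overlaps, which requires (a) controlling collisions within $u_2$'s own draws so that distinctness is preserved, and (b) lower bounding the probability that $u_2$ hits $\tau$ prescribed movies among $\tau'\approx pT$ candidates without over-counting. Both are handled by the $T^2/m = o(1)$ estimate, but they are exactly where the with-replacement model genuinely differs from the without-replacement model of Theorem~\ref{t:toofew}. The upper bound in (i) and the Bayes' rule bookkeeping are routine.
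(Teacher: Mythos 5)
Your proof is correct, but it reaches the two key estimates by a different route than the paper. The paper exploits the independence of the emissions to write down direct product formulas for the \emph{positional} overlap count: for the same-genre case it computes $\Pr[\mathcal{O}=\tau\mid \chi(u_1)=\chi(u_2)] = \binom{T}{\tau}^2(1/m)^{\tau}(1-\frac{T-\tau}{m})^{2(T-\tau)}$ as in \eqref{eq:ioverlapsame}, and for the different-genre case it simply replaces $1/m$ by $p/m$ (each positional coincidence must land in $R$), so the ratio of the two conditional probabilities is immediately $p^{\tau}=\Theta(1)$ with no conditioning on $\zeta$ at all. You instead work with the number of \emph{distinct} shared movies and transplant the structure of the proof of Theorem~\ref{t:toofew}: a union bound over $\tau$-subsets covered by both users for the upper bound, and conditioning on the number $\zeta$ of distinct $R$-movies rated by $u_1$ (plus a collision count to control distinctness) for the lower bound. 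Both arguments are sound. The paper's version is shorter precisely because independence makes the closed-form positional computation painless; yours buys uniformity with the without-replacement case and works directly with the quantity ($|S_1\cap S_2|$) that a nearest-neighbor method would actually observe, at the cost of the extra disjointification step you correctly flag -- when lower bounding the chance that $u_2$ hits at least $\tau$ of the $\tau'$ prescribed movies, you must sum over events of the form ``exactly the $\tau$-subset $B$ is hit and the rest of the $\tau'$ candidates are missed,'' which are disjoint and each carry the harmless factor $(1-\tau'/m)^{T-\tau}=1-o(1)$. The Bayes bookkeeping at the end is identical in both proofs.
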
 
\begin{proof}

Let us prove (i) first. 

Let $u_1$, $u_2$ be two users and let $\chi(u_1), \chi(u_2)$ be their respective genres. Let $\mathcal{O}$ be a random variable denoting the size of the overlap of the ratings of $u_1$ and $u_2$. Then, we have
\begin{equation} \Pr[\mathcal{O} = \tau | \chi(u_1) = \chi(u_2)] = \binom{T}{\tau}^2 \left(\frac{1}{m}\right)^{\tau} \left(1-\frac{T-\tau}{m}\right)^{2(T-\tau)} \label{eq:ioverlapsame}\end{equation}

Indeed, the $\binom{T}{\tau}^2$ counts all possible $\tau$ locations in which the users overlap; $(\frac{1}{m})^{\tau} (1-\frac{T-\tau}{m})^{2(T-\tau)}$ is the probability that the users agree in those $\tau$ locations, and only there. 

Since $\binom{T}{\tau}^2 \leq T^{2\tau}$, we get
$$ \Pr[\mathcal{O} = \tau | \chi(u_1) = \chi(u_2)]  \leq \left(\frac{T^2}{m}\right)^{\tau} $$

But $\Pr[\mathcal{O} = \tau | \chi(u_1) = \chi(u_2)] \geq \Pr[\mathcal{O} = \tau | \chi(u_1) \neq \chi(u_2)]$, so $\Pr[\mathcal{O} = \tau] \leq \left(\frac{T^2}{m}\right)^\tau$. Union bounding, we get 
$$ \Pr[\exists u_1,u_2, \mbox{ that intersect in } > 3c/\epsilon \mbox{ ratings}] \leq \frac{1}{m^c} $$
which proves the first claim.  

Let us turn to (ii) now. Reasoning analogously as for \eqref{eq:ioverlapsame}, we get
\begin{align*} \Pr[\mathcal{O} = \tau | \chi(u_1) \neq \chi(u_2)] \geq  \binom{T}{\tau}^2 \left(\frac{p}{m}\right)^{\tau} \left(1-\frac{T}{m}\right)^{2(T-\tau)}
\end{align*}

However, since $T^2 = o(m)$, we have $(1-\frac{T}{m})^{2(T-\tau)} \geq 1 - o(1)$, so 
$$ \Pr[\mathcal{O} = \tau | \chi(u_1) \neq \chi(u_2)]  \geq  p^{\tau} \Pr[\mathcal{O} = \tau | \chi(u_1) = \chi(u_2)] $$ 
But, since $\tau = O(1)$, we get  
$$ \frac{\Pr[\mathcal{O} = \tau | g(u_1) \neq g(u_2)]}{\Pr[\mathcal{O} = \tau | g(u_1) = g(u_2)]} = \Theta(1).$$ 
Additionally, $\Pr[g(u_1) = g(u_2)] = \frac{1}{k}$, so by Bayes' rule
$$ \frac{\Pr[g(u_1) \neq g(u_2) | \mathcal{O} \geq \tau ]}{\Pr[g(u_1) = g(u_2) | \mathcal{O} \geq \tau]} \gtrsim k$$ 
which proves (ii).  


\end{proof}

More precisely, we will prove the following theorem: 
\begin{thm} In the setup of this section, with probability $1 - \exp(-\log^2 M)$ \\
(i) Two users that differ on at least one of the genres they like agree on at most $(1+o(1)) \left(\frac{1}{s}-\frac{1}{s^2}\right) T^2/m$ ratings. \\  
(ii) Two users that like the same genres share at least $(1-o(1))\frac{1}{s} T^2/m $ ratings. \\
\label{t:indep_larget}
\end{thm}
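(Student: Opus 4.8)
The plan is to mirror the proof of Theorem~\ref{t:larget} from Appendix~\ref{a:largeT}, adapting each step to the independent-emission model, in which each user draws its $T$ movies as iid uniform samples from the union of its genres (repetitions allowed) rather than as a uniformly random $T$-subset. As in that proof, for a fixed pair of users $u_1,u_2$ I would introduce, for each relevant movie $i$, the indicator $I_i = \mathbf{1}[\text{both } u_1 \text{ and } u_2 \text{ rate } i]$, so that the number of shared (distinct) ratings is $\sum_i I_i$. The argument then splits into two ingredients: (a) computing $\E[\sum_i I_i]$ and showing it matches the target up to $1\pm o(1)$ factors, and (b) a concentration argument showing $\sum_i I_i$ is tightly concentrated around its mean with failure probability $\exp(-\log^2 M)$, followed by a union bound over pairs of users.

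For the expectation, the only change from the distinct-movies model is that the probability that a user whose genre-union has size $m'$ rates a fixed movie $i$ in that union is now $1-(1-1/m')^T$ rather than $T/m'$. Since the union sizes $m_1,m_2$ of $u_1,u_2$ are $(1\pm o(1))sm$ (because $s=O(1)$ and pairwise genre overlaps are $o(m)$), and since $T=\Omega(\sqrt m\log m)$ forces $T/m'=o(1)$, a two-term expansion gives $1-(1-1/m')^T=(1\pm o(1))T/m'$. Hence $\E[I_i]=(1\pm o(1))T^2/(m_1 m_2)$ for movies lying in both unions, and $\E[I_i]=0$ otherwise. For (i), two users differing on a genre share at most $s-1$ genres, so the number of movies in a genre of both is at most $(1+o(1))(s-1)m$ (the $o(m)$ slack absorbing cross-genre overlaps of size $\leq \delta m$), giving $\E[\sum_i I_i]\leq (1+o(1))(\frac1s-\frac1{s^2})\frac{T^2}{m}$, exactly as before. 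For (ii), writing $L\leq sm$ for the size of the common genre union, $\E[\sum_{i\in[L]} I_i]=(1-o(1))T^2/L\geq (1-o(1))\frac1s\frac{T^2}{m}$.

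The main obstacle is the concentration step, since the $I_i$ are dependent. Here the independent-emission model is actually cleaner than the distinct-movie model: for a single user, the bin-occupancy indicators of $T$ iid uniform draws into $m'$ bins are negatively associated (the standard balls-into-bins negative-association result). Writing $I_i=X_iY_i$, where $\{X_i\}$ is $u_1$'s occupancy family and $\{Y_i\}$ is $u_2$'s, the two families are independent and each negatively associated, so their union is negatively associated; and since $I_i=X_i\wedge Y_i$ is a nondecreasing function of the disjoint variable pair $(X_i,Y_i)$, closure of negative association under monotone functions of disjoint index sets implies $\{I_i\}$ is itself negatively associated. In particular this yields both $\Pr[\prod_{i\in U} I_i]\leq \prod_{i\in U}\Pr[I_i]$ and the complementary inequality $\Pr[\prod_{i\in U}\bar I_i]\leq \prod_{i\in U}\Pr[\bar I_i]$ used in Theorem~\ref{t:larget}.

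With negative association established, the Chernoff-type tail bounds for negatively associated variables of \cite{panconesi1997randomized} apply verbatim: the upper-tail bound gives (i) and the lower-tail bound gives (ii), each with an $o(1)$ relative deviation and failure probability $\exp(-\log^2 M)$ in the stated regime (the means being $\Omega(T^2/m)$, which is large enough here). A union bound over all pairs of users then finishes the proof. The only point requiring genuine care is verifying the occupancy negative-association together with its closure properties precisely; everything else is a direct transcription of the distinct-emission argument, with $T/m'$ replaced by $1-(1-1/m')^T$.
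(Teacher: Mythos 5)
Your proposal is correct, and the expectation computations coincide with the paper's, but your concentration argument takes a genuinely different route, especially for part (ii). For (i), the paper works with position-indexed indicators $I_{i,t,t'}=\mathbf{1}[u_1\text{ emits }i\text{ at }t]\cdot\mathbf{1}[u_2\text{ emits }i\text{ at }t']$, whose sum overcounts multiplicities but still upper-bounds the overlap; the product inequality $\Pr[\prod_{u\in U}I_u]\leq\prod_{u\in U}\Pr[I_u]$ is then immediate (each position emits exactly one movie), and \cite{panconesi1997randomized} gives the upper tail. For (ii), the paper does \emph{not} apply a tail bound to the with-replacement model directly: it first shows (via a separate concentration argument on the number of repeated emissions) that all but $o(T)$ of the emitted movies are distinct, so that conditionally the rated set is a uniform random subset of the genre union, and then invokes part (ii) of Theorem~\ref{t:larget}. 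You instead handle both tails uniformly by establishing negative association of the occupancy indicators $X_i,Y_i$ (the standard balls-into-bins NA result), using closure of NA under independent unions and under monotone functions of disjoint index sets to conclude $\{I_i=X_iY_i\}$ is NA, which yields both product inequalities needed for the two-sided Chernoff bounds of \cite{panconesi1997randomized}. Your route is cleaner and more self-contained --- one lemma covers both directions and avoids the reduction to the without-replacement theorem --- at the cost of having to verify the NA closure properties carefully, which you correctly flag as the delicate point. Two minor remarks: your sentence that ``$T=\Omega(\sqrt{m}\log m)$ forces $T/m'=o(1)$'' has the logic backwards --- what is needed is the implicit upper bound $T=o(m)$ (which the paper also uses silently when it writes $T^2/m=o(T)$); and the deviation/failure-probability bookkeeping (mean $\Omega(\log^2 m)$ versus target failure probability $\exp(-\log^2 M)$) deserves one explicit line, though the paper is equally terse there.
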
 

\begin{proof}

The proof proceeds similarly as the proof of Theorem~\ref{t:larget}. 

Let us proceed to (i) first. Let $I_{i,t,t'}, i \in [k]$ be a $0-1$ random variable, which is $1$ if $u_1$ emitted movie $i$ at position $t$ and $u_2$ emitted movie $i$ at position $t'$. Then, the size of the overlap between the users is certainly upper bounded by $\sum_{t,t',i} I_{i,t,t'} $. 
Let $m_1$ be the size of the union of all the genres user 1 likes, and $m_2$  the size of the union of all the genres user 2 likes. Then, for any movie $i$ that belongs to at least one of the genres of both $u_1, u_2$, 
$$ \E[I_{i,t,t'}] = \frac{1}{m_1} \frac{1}{m_2}$$
For the rest of the movies, $\E[I_{i,t,t'}] = 0$ otherwise. Since each of the genres has $m$ movies, we get 
$$ \E\left[\sum_{i,t,t'} I_{i,t,t'} \right] \leq  (1+o(1)) (s-1) m\left(\frac{T}{sm}\right)^2 = (1+o(1)) \left(\frac{1}{s}-\frac{1}{s^2}\right) \frac{T^2}{m} $$ 

However, it's obvious that the variables $I_{i,t,t'}$ satisfy $\forall U \subseteq [k] \times [T] \times [t]$, 
$$\Pr[\Pi_{u \in U} I_{u}] \leq \Pi_{i \in U} \Pr[I_u] $$ 
so by ~\cite{panconesi1997randomized}, the usual Chernoff upper bounds hold, which implies the statement. 

We proceed to (ii) next. Let $u_1, u_2$ be two users that share all genres.  Let us denote by $B$ the set of movies $i$ which belongs to a single genre. 
We will show that with probability $1-\exp(-\log^2 M)$, the number of movies in $B$ that were rated exactly once is at least $(1-o(1))T$. From this, the claim is immediate: namely, conditioned on the number of movies in $B$ that were rated being $l$, the movies themselves are uniform are uniformly distributed among all $l$-sized subsets of $B$. Since the size of $B$ is $(1-o(1))sm$, we can apply (ii) of Theorem~\ref{t:larget} to get the conclusion of the theorem. 

Returning to the desired claim, let $I_i$ be a $0-1$ random variable, which is 1 if the movie $i$ was emitted more than once. We claim that with probability $1-\exp(-\log^2M)$, $\sum_{i \in B} I_i$ is at most $T^2/m$. Namely, note that $\Pr[I_i | I_U] \leq \frac{T}{m}$,  for any subset $U \subset B$. By Lemma 1.19 in \cite{auger2011theory}, it holds that with probability $1-\exp(-\log^2 M)$, $\sum_{i \in B} I_i \leq (1+o(1)) \frac{T^2}{m} = o(T)$. Also, by Chernoff, with probability $1-\exp(-\log^2M)$, the number of movies in $B$ that appear at least once is $(1-o(1))T$. Union bounding, the number of movies in $B$ that appear exactly once is at least $(1-o(1))T$. 


 
\end{proof}  
\section{Proof of Theorem~\ref{t:randomguess}} 
\label{a:randomguess}

\begin{proof}
The theorem holds vacuously if $stT^2/k > 1/2$, so we may assume otherwise. 

We will prove an even stronger version of the claim of the theorem: namely if $\mathcal{A}$ even has access to $w$ and $h_1, h_2, \dots, h_t$ in addition to $(x_i, l(h_i)), i \in [1,t]$ the statement still holds true. 
The theorem will follow from the following two claims: \\
(i) The probability of a user emitting at least one of the already emitted movies is $O(s tT^2/k)$. \\
(ii) Conditioned on a user not emitting at least one of the already emitted movies, all $h_{t+1}$ are equally likely. 

Claims (i) and (ii) imply the theorem statement since $\mbox{sgn}(\langle w, 2h - 1 \rangle) = - \mbox{sgn}(\langle w, 2(1-h) - 1 \rangle)$, and hence 
$$\Pr_{h}[\mbox{sgn}(\langle w, 2h - 1 \rangle) = 1] = \Pr_{h}[\mbox{sgn}(\langle w, 2h - 1 \rangle) = -1] = \frac{1}{2}$$ 
where the probability is over the choice of a random $h$.
	
We proceed to (i) first. Since the number of movies rated by the first $t$ users is at most $tT$, the probability of a user rating at least one of the $tT$ seen movies is $O(s tT^2/k)$. 



Next, we prove (ii). 
Towards that, let's denote by $g(m)$ the random variable for the genre of movie $m$, and let us overload the notation to denote by $g(U) := (g(m): m \in U)$, i.e. the genre assignments of a set of movies $U$ (note we consider $g(U)$ an ordered $|U|$-tuple, rather than a set). In this notation, we claim $\Pr[g(x_{t+1}) | (x_i, g(x_i)), i \in [1,t], x_{t+1}]$ is uniform over all $T$-tuples that don't include more than $s$ distinct genres (where the probability is taken over both the genre assignments of the movies, and the $t$ samples). By Bayes law, it suffices to show  
\begin{align*} &\Pr[(x_i, g(x_i)), i \in [1,t], x_{t+1} | g(x_{t+1}) = H] = \Pr[(x_i, g(x_i)), i \in [1,t], x_{t+1} | g(x_{t+1}) = H'] \end{align*} 
for all $H,H'$ that don't include more than $s$ distinct genres. We will massage this expression a bit, introducing the notation $I_t = ((x_i, g(x_i)), i \in [1,t])$. 
\begin{align*} &\Pr[I_t, x_{t+1} | g(x_{t+1}) = H] \\ 
&= \Pi_{i=1}^t \Pr[x_i, g(x_i)| g(x_{t+1}) = H, I_{t-1}] \\ 
&= \Pi_{i=1}^t \sum_{|h_i|_0 = s} \Pr\left[h_i| g(x_{t+1}) = H, I_{i-1}\right] \Pr\left[g(x_i) | h_i, g(x_{t+1}) = H, x_1, I_{i-1}\right] \Pr\left[x_i | g(x_i), h_i, g(x_{t+1}) = H, I_{i-1}\right]  \end{align*} 
On the other hand, since $h_i$ is independent of $g(x_{t+1})$ and $I_{i-1}$, $\Pr\left[h_i| g(x_{t+1}) = H, I_{i-1}\right] $ is uniform over all $h_i$, s.t. $|h_i|_0 = s$. Hence,   
\begin{align}
& \Pr[I_t, x_{t+1} | g(x_{t+1}) = H]  \nonumber\\
&\propto \Pi_{i=1}^t \sum_{g(x_i), |h_i|_0 = G/2} \Pr[g(x_i)| h_i, g(x_{t+1}) = H, I_{i-1}]  \Pr[x_i| g(x_i), h_i, g(x_{t+1}) = H, I_{i-1}]  \label{eq:temp1} \end{align}
Now, note that $\Pr[g(x_i)| h_i, g(x_{t+1}) = H, I_{i-1}] = 0$ if the genres in $g(x_i)$ are not included in $h_i$, and furthermore $ \Pr[x_i| g(x_i), h_i, g(x_{t+1}) = H, I_{i-1}] = 1/\binom{sM/k}{T}$, for those $g(x_i)$. 
From these two observations it follows that \eqref{eq:temp1} is independent of $H$, which finishes the proof of the theorem.  
  


\end{proof} 

\end{document}